\documentclass{article}

\usepackage[preprint]{neurips_2026}

\usepackage[utf8]{inputenc} 
\usepackage[T1]{fontenc}    
\usepackage{hyperref}       
\usepackage{url}            
\usepackage{booktabs}       
\usepackage{amsfonts}       
\usepackage{nicefrac}       
\usepackage{microtype}      
\usepackage{xcolor}         
\usepackage{wrapfig}

\usepackage{microtype}
\usepackage{graphicx}
\usepackage{subcaption}
\usepackage{booktabs} 
\usepackage{booktabs}
\usepackage{siunitx}
\usepackage{xcolor,colortbl}
\usepackage{booktabs}
\usepackage[table]{xcolor}
\usepackage{tabularx}
\usepackage{makecell}
\usepackage{marvosym}
\usepackage{array} 
\usepackage{booktabs}

\usepackage{multirow}

\usepackage{amsmath}
\usepackage{amssymb}
\usepackage{mathtools}
\usepackage{amsthm}
\usepackage{colortbl}
\usepackage{cleveref}
\usepackage{graphicx}
\usepackage{caption}
\usepackage{subcaption}
\usepackage{booktabs}

\definecolor{upColor}{RGB}{17,138,21}
\definecolor{downColor}{RGB}{174,36,67}

\definecolor{oursblue}{RGB}{225,240,255}

\theoremstyle{plain}
\newtheorem{theorem}{Theorem}[section]

\newtheorem{lemma}[theorem]{Lemma}
\newtheorem{corollary}[theorem]{Corollary}
\theoremstyle{definition}

\theoremstyle{remark}
\newtheorem{remark}[theorem]{Remark}

\usepackage[textsize=tiny]{todonotes}

\title{Hyper-DP3: Frequency-Aware Right-Sizing of 3D Diffusion Policies for Visuomotor Control}

\makeatletter
\newcommand{\afffootnote}[1]{%
\begingroup
\renewcommand\thefootnote{}%
\footnote{#1}%
\addtocounter{footnote}{-1}%
\endgroup
}
\makeatother

\author{%
Jinhao Zhang$^{1}$\thanks{Equal contribution.}
\And
Zhexuan Zhou$^{1}$\footnotemark[1]
\And
Huizhe Li$^{1}$
\And
Yichen Lai$^{1}$
\AND
Wenlong Xia$^{1}$
\And
Haoming Song$^{2}$
\And
Youmin Gong$^{1}$
\And
Jie Mei$^{1}$\thanks{Corresponding author: \texttt{jmei@hit.edu.cn}}
}

\begin{document}

\maketitle

\afffootnote{$^{1}$ Harbin Institute of Technology, Shenzhen}
\afffootnote{$^{2}$ Shanghai Jiao Tong University}

\begin{abstract}
Diffusion-based visuomotor policies perform well in robotic manipulation, yet current methods still inherit image-generation-style decoders and multi-step sampling. We revisit this design from a frequency-domain perspective. Robot action trajectories are highly smooth, with most energy concentrated in a few low-frequency discrete cosine transform modes. Under this structure, we show that the error of the optimal denoiser is bounded by the low-frequency subspace dimension and residual high-frequency energy, implying that denoising error saturates after very few reverse steps. This also suggests that action denoising requires a much simpler denoising model than image generation. Motivated by this insight, we propose Hyper-DP3 ({\bf HDP3}), a pocket-scale 3D diffusion policy with a lightweight Diffusion Mixer decoder that supports two-step DDIM inference.  Our synthetic experiments validate the theory and support the sufficiency of two-step denoising. Futhermore, across RoboTwin2.0, Adroit, MetaWorld, and real-world tasks, HDP3 achieves state-of-the-art performance with fewer than 1\% of the parameters of prior 3D diffusion-based policies and substantially lower inference latency.
\end{abstract}

\section{Introduction}
Learning from demonstration provides a practical way to acquire visuomotor manipulation skills directly from expert data\citep{argall2009survey,osa2018algorithmic}, but standard behavior cloning often struggles with the multimodal nature of robotic actions, leading to mode averaging and compounding errors under distribution shift\citep{ross2011reduction}. Diffusion-based policies address this limitation by modeling a conditional denoising process over action trajectories, and have shown strong performance in manipulation tasks\citep{chi2025diffusion}. Recent work has further extended this paradigm from 2D image observations to 3D point clouds, improving geometric robustness and transferability\citep{ze20243d}.

However, current 3D diffusion-based policies still largely inherit their design from image generation\citep{ze20243d}. They typically combine a lightweight point-cloud encoder with a heavy decoder backbone, often a conditional U-Net\citep{ronneberger2015u} or DiT\citep{peebles2023scalable}, and require multiple denoising steps at inference time. This is reasonable for reconstructing dense high-frequency image details, but robot action generation is fundamentally different: the target is a short, temporally smooth trajectory rather than a perceptual signal. This mismatch suggests that current 3D diffusion policies may be substantially overdesigned in both decoder capacity and denoising steps.

Our starting point is the frequency structure of robot trajectories. We find that action energy in frequency domain is highly concentrated in the lowest discrete cosine transform (DCT) modes, with the first two modes accounting for nearly all of the total energy. Building on this observation, we provide a frequency-domain theoretical analysis showing that, when trajectory energy is concentrated in a shallow frequency-domain subspace, only a very small number of sampling steps is sufficient to drive the decoding error to a low level, thereby having negligible effect on the closed-loop execution of the trajectory. As a result, for smooth robot trajectories, performance should saturate after very few reverse steps (as observed in our experiments, two steps are already sufficient), and later denoising iterations offer diminishing returns.

Besides, recent theory on diffusion models shows that, when data are concentrated on a low-dimensional subspace, the complexity of score approximation is governed primarily by the intrinsic, rather than ambient, dimension of the data\citep{chen2023score}. From this perspective, the strong concentration of robot actions in the frequency domain suggests that the score field to be learned for action denoising is substantially simpler than that of high-dimensional perceptual signals. Therefore, a large image-generation-style decoder may be unnecessary; what is needed instead is efficient fusion of temporal, channel-wise, and conditional information. Based on this principle, we propose Hyper-DP3 (\textbf{HDP3}), a pocket-scale 3D diffusion policy that replaces the heavy conditional decoder of prior methods with a lightweight \textbf{Diffusion Mixer (DiM)} decoder. With standard DDIM sampling\citep{song2020denoising}, HDP3 supports two-step inference without consistency distillation or MeanFlow training. Across three simulation benchmarks—RoboTwin2.0, Adroit, and MetaWorld—it achieves state-of-the-art performance with fewer than 1\% of the parameters of prior methods and substantially lower inference latency, while real-world experiments further validate its practicality beyond simulation. Our contributions are summarized as follows:
\begin{itemize}
  \item We show that robot action trajectories are strongly low-frequency-dominant, revealing a mismatch in existing 3D diffusion policies, which still rely on heavy image-generation-style decoders and multi-step denoising.
  \item We provide a theoretical frequency-domain analysis showing that, for such trajectories, denoising error saturates quickly, so very few reverse steps are sufficient for effective action generation.
  \item We propose Hyper-DP3 (\textbf{HDP3}), a pocket-scale 3D diffusion policy with a lightweight \textbf{Diffusion Mixer (DiM)} decoder that supports efficient two-step DDIM inference.
  \item We validate the analysis and design on synthetic data, three simulation benchmarks, and real-world tasks, achieving state-of-the-art performance with fewer than 1\% of the parameters of prior methods and substantially lower inference latency.
\end{itemize}

\section{Related Work}
\subsection{Diffusion Models for Visuomotor Control.} Traditional Behavior Cloning (BC) often struggles with multimodal action distributions, leading to mode-averaging artifacts that result in suboptimal or unsafe behaviors in manipulation tasks. Diffusion Policy\citep{chi2025diffusion} addresses this fundamental limitation by formulating policy learning as a conditional denoising process over the action space. By learning the gradient of the action distribution, it effectively captures complex, multimodal human behaviors and provides superior stability in high-dimensional manipulation tasks compared to explicit regression policies. This probabilistic formulation has demonstrated remarkable expressiveness in capturing the inherent stochasticity of human demonstrations, outperforming prior generative approaches such as Implicit Behavior Cloning\citep{florence2022implicit} and Conditional VAEs\citep{sohn2015learning} across diverse benchmarks. Furthermore, the iterative refinement mechanism inherent to diffusion models enables the generation of temporally coherent action sequences, which is critical for contact-rich manipulation scenarios. More recently, generative robot policies have also been extended to the vision-language-action regime \citep{black2024pi0, physicalintelligence2025pi05, song2025hume}.

\subsection{From 2D to 3D Representations.} While early diffusion policies relied predominantly on 2D RGB images as visual observations, such representations lack robustness against lighting variations, viewpoint changes, and domain shifts between simulation and real-world environments. Recent works leverage 3D point clouds processed by geometric encoders like PointNet++\citep{qi2017pointnet++} to extract view-invariant spatial features that better capture object geometry and spatial relationships. 3D Diffusion Policy (DP3) \citep{ze20243d} integrates these efficient 3D representations with diffusion models, achieving state-of-the-art data efficiency and generalization capabilities in few-shot imitation learning settings. Complementary approaches such as Act3D\citep{gervet2023act3d} and PerAct\citep{shridhar2023perceiver} have further demonstrated the benefits of 3D-aware representations for language-conditioned manipulation. However, existing architectures typically pair lightweight point encoders with computationally heavy U-Net decoders for the diffusion backbone, creating a significant parameter redundancy and inference efficiency bottleneck that limits real-time deployment on resource-constrained robotic platforms.

\subsection{Inference Acceleration.} To overcome the prohibitively high latency of iterative denoising during deployment, recent research has focused on principled approaches to sampling acceleration without sacrificing action quality. One-Step Diffusion Policy (OneDP)\citep{wang2025onestep} employs knowledge distillation techniques to compress the multi-step diffusion process into a single forward pass, achieving real-time control frequencies of 62Hz suitable for reactive manipulation. Consistency Policy\citep{prasad2024consistency} takes an alternative approach by enforcing self-consistency constraints along probability flow ordinary differential equation (ODE) trajectories, enabling high-quality action generation in just 1-2 denoising steps while preserving the multimodal expressiveness of the original diffusion formulation. Drawing inspiration from recent advances in generative modeling, FlowPolicy\citep{zhang2025flowpolicy} utilizes Consistency Flow Matching\citep{yang2024consistency} to learn straight-line ODE paths between noise and data distributions, enabling faster, more numerically stable inference. MP1\citep{sheng2026mp1} employs MeanFlow\citep{geng2025mean} for one-step generation without timestep conditioning. These acceleration techniques collectively represent a promising direction toward bridging the gap between the representational power of diffusion-based policies and the stringent latency requirements of closed-loop robotic control.

\section{Problem Setup and Policy Overview}
We learn an end-to-end visuomotor policy $\pi_\theta:\mathcal{O}\rightarrow\mathcal{A}$ from expert demonstrations of observation--action pairs, where each observation consists of a single-view, robot-centric point cloud $\mathbf{P}$ and robot proprioceptive state $\mathbf{s}$. Following the DDPM formulation\citep{ho2020denoising}, we model action generation as a conditional denoising process. Given a clean action trajectory $x_0$, the forward process corrupts it at timestep $t$ as
\begin{equation}
x_t=\alpha_t x_0+\sigma_t \epsilon, \qquad \epsilon\sim\mathcal{N}(\mathbf{0},\mathbf{I}),
\end{equation}
where $\alpha_t$ and $\sigma_t$ are determined by the noise schedule. Our model adopts the $x_0$-prediction parameterization\citep{ramesh2022hierarchical} and directly predicts the clean trajectory $\hat{x}_0$ from the
noisy input $x_t$.

As shown in Fig.~\ref{fig:arch}, our framework consists of a compact observation encoder and a lightweight diffusion decoder. Following DP3\citep{ze20243d}, we first downsample $\mathbf{P}$ using farthest point
sampling and encode the sampled points with an MLP followed by max pooling to obtain a compact 3D feature $\mathbf{z}$. We then linearly project the robot state and add it to $\mathbf{z}$ to form the conditioning context:
\begin{equation}
C=\mathbf{z}+\mathrm{Linear}(\mathbf{s}).
\end{equation}
Conditioned on $C$ and the diffusion timestep $t$, the decoder takes the noisy trajectory $x_t$ as input, projects it into a sequence of tokens, processes them through $K$ DiM blocks, and maps the fused features to the
final prediction $\hat{x}_0$. Details of the DiM block are provided in Sec.~\ref{sec:dim}.

During training, we optimize the reconstruction objective
\begin{equation}
\mathcal{L}=\mathbb{E}_{(x_0,\mathbf{P},\mathbf{s}),t,\epsilon}
\left[\|\hat{x}_0-x_0\|_2^2\right].
\end{equation}
At inference time, we use DDIM sampling\citep{song2020denoising} to accelerate denoising. For highly smooth trajectory data, we find that, in practice, two denoising steps are sufficient without requiring additional
consistency distillation techniques\citep{prasad2024consistency,wang2024one,zhang2025flowpolicy}. The corresponding theoretical justification is provided in Sec.~\ref{sec:2s}.

\section{Method}
\subsection{Motivation} \label{sec:motivation}

Most existing diffusion-based visuomotor policies\citep{chi2025diffusion,ze20243d} rely on multi-step DDPM\citep{ho2020denoising} or DDIM\citep{song2020denoising} denoising schedules together with large U-Net\citep{ronneberger2015u} or DiT\citep{peebles2023scalable} decoders. However, prior work has largely overlooked a key property of robot trajectory data: these trajectories are highly smooth in the time domain, and thus their energy is concentrated primarily in low-frequency bands in the frequency domain as shown in \citep{tan2024fourier}. Exploiting this property, we show both theoretically and empirically that, for robot trajectory data, just \textbf{two DDIM denoising steps} are sufficient for the decoding accuracy (measured by MSE) and rollout success rate to nearly saturate, \textbf{without} requiring additional sampling acceleration techniques such as consistency distillation\cite{song2023consistency} or MeanFlow\citep{geng2025mean}. Furthermore, motivated by the fact that action data inherently lie on a low-dimensional manifold, and by recent theory showing that score approximation in diffusion models depends on the intrinsic rather than ambient dimension of the data\citep{chen2023score}, we propose a much lighter decoder architecture that uses less than 1\% of the parameters of prior methods while surpassing previous state-of-the-art performance.

\begin{wrapfigure}{r}{0.52\linewidth}
    \centering
    \vspace{-0.5\baselineskip}
    \includegraphics[width=\linewidth]{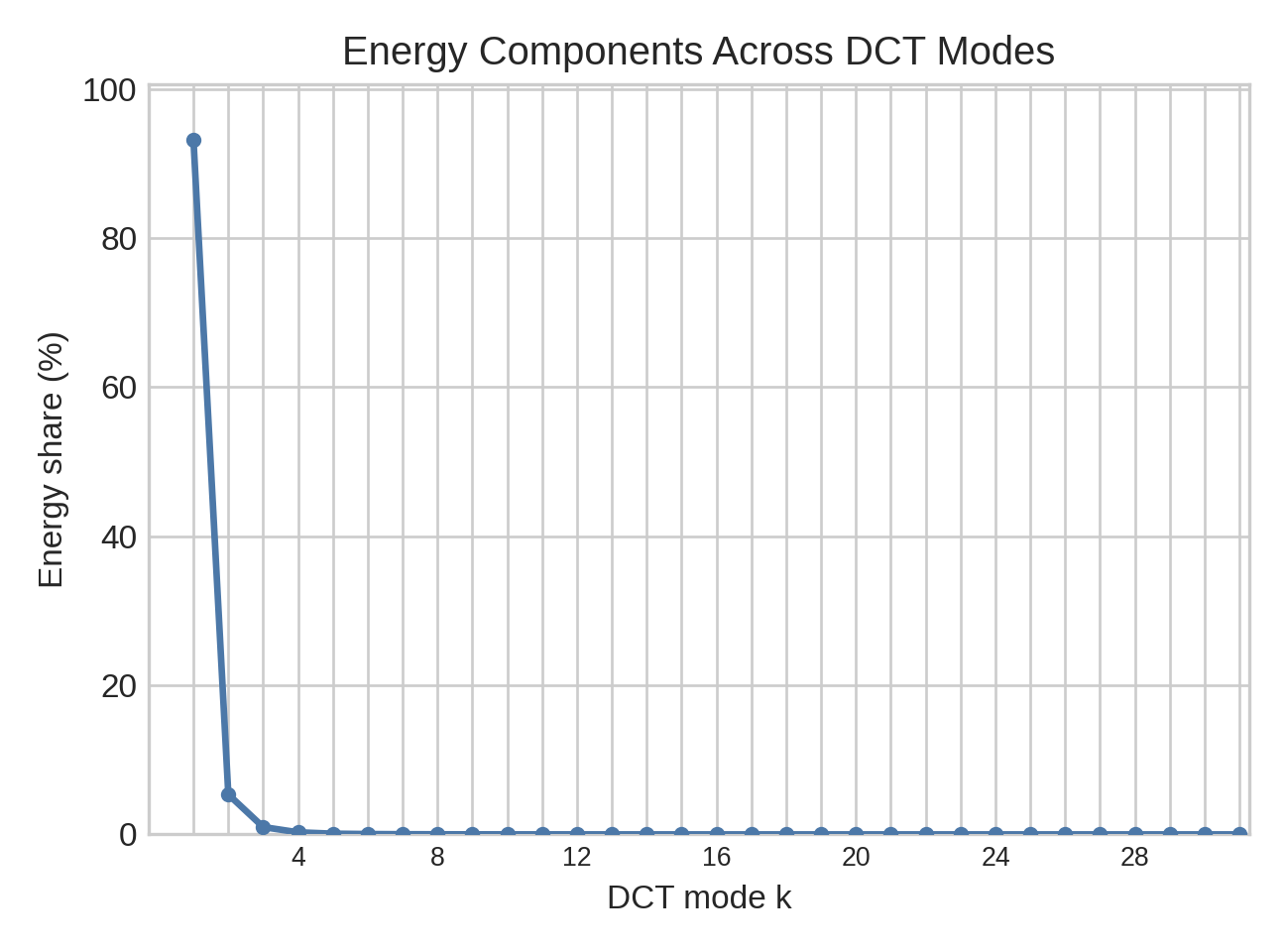}
    \caption{
    Frequency structure of action trajectories.
    }
    \label{fig:dct_stats}
    \vspace{-0.5\baselineskip}
\end{wrapfigure}
To further quantify the low-frequency-dominant nature of robot trajectories, we apply the discrete cosine transform (DCT) to map them into the frequency domain and measure the energy proportion in each frequency band. Specifically, we collect 2,500 episodes across all 50 tasks in RoboTwin2.0 and segment them into approximately 20,000 trajectories with a sequence length of $T=32$. We then apply the DCT and compute the energy ratio of each DCT mode, as shown in Figure~\ref{fig:dct_stats}. As can be seen, the first two DCT modes account for 98.5\% of the total energy, with the vast majority concentrated in the $k=1$ mode alone (93.2\%). This indicates that robot trajectory data lies on a low-dimensional manifold in the frequency domain and is highly concentrated in the low-frequency regime. More details on the frequency-domain decomposition and the energy distribution for different tasks are provided in Appendix~\ref{app:freq}.

\subsection{Two-Step Denoising Suffices for Robot Action Data} \label{sec:2s}

Prior work has provided both theoretical and empirical evidence that diffusion models tend to first generate low-frequency structure during sampling and then refine high-frequency details\citep{falck2025fourier}. Intuitively, for robot trajectories, where low-frequency components are overwhelmingly dominant, we expect the low-frequency portion to be reconstructed quickly in the early sampling steps. Subsequent refinement of the high-frequency components, which are mainly attributed to sensor noise and other minor perturbations\citep{tan2024fourier}, becomes unnecessary. As a result, only a small number of decoding steps should be sufficient to reach saturation performance.

To formalize this intuition, we study the denoising error in the frequency domain. Let $x_0$ denote the original data, and let $y_0$ denote its frequency-domain representation obtained through an orthogonal DCT. For analytical convenience, we assume that $y_0=\mathbf{U}x_0$ is distributed as a zero-mean Gaussian with covariance $\mathbf{\Sigma}$. Let $\mathbf{P}_L = \operatorname{diag}(1,\ldots,1,0,\ldots,0)$ be a diagonal matrix with the first $m$ diagonal entries equal to 1 and the remaining entries equal to 0, and let $\mathbf{P}_H = \mathbf{I} - \mathbf{P}_L$. The low-frequency-dominant nature of trajectory data can then be expressed as follows:
\begin{equation}
    \operatorname{Tr}(\mathbf{P}_H \mathbf{\Sigma}) \leq \eta \operatorname{Tr}(\mathbf{\Sigma})
\end{equation}
where $\operatorname{Tr}(\cdot)$ denotes the matrix trace, and $\eta$ denotes the fraction of energy contained in the high-frequency components. Consider the forward noising process $x_t = \alpha_t x_0 + \sigma_t \epsilon$. Then, as shown in~\citep{chung2022diffusion}, the optimal denoiser is given by the posterior expectation $x_0^*=\mathbb{E}[x_0|x_t]$. Because $\mathbf{U}$ is orthogonal, the noising process and the optimal estimator admit the same form in the frequency domain:
\begin{gather}
    y_t = \alpha_t y_0 + \sigma_t \xi, \quad \xi \sim \mathcal{N}(0, {\bf I}) \\
    y_0^*= \mathbf{U}x^*_0 = \mathbf{U}\mathbb{E}[x_0|x_t] = \mathbb{E}[\mathbf{U}x_0|\mathbf{U}x_t] = \mathbb{E}[y_0|y_t] \label{eq:optimal_freq}
\end{gather}
The error of this optimal estimator in the frequency domain satisfies the following theorem.
\begin{theorem}[Frequency-Domain Error Bound of the Optimal Estimator] \label{th:1}
    Under the above assumptions, the optimal estimation errors in the low- and high-frequency components satisfy, respectively:
    \begin{gather}
        e_L = \frac{1}{n}\mathbb{E}\|\mathbf{P}_L\left(y_0-\mathbb{E}[y_0|y_t]\right)\|^2 \leq \frac{m}{n}\frac{\sigma_t^2}{\alpha_t^2} \\
        e_H = \frac{1}{n}\mathbb{E}\|\mathbf{P}_H\left(y_0-\mathbb{E}[y_0|y_t]\right)\|^2 \leq \frac{\eta}{n} \operatorname{Tr}(\mathbf{\Sigma})
    \end{gather}
     where $n$ is the number of frames in the predicted trajectory, $e_L$ and $e_H$ denote the mean squared errors (MSEs) of the low- and high-frequency components, respectively. Consequently, the total error satisfies:
     \begin{equation} \label{eq:cov}
         e = \frac{1}{n}\mathbb{E}\|x_0-\mathbb{E}[x_0|x_t]\|^2 = \frac{1}{n}\mathbb{E}\|y_0-\mathbb{E}[y_0|y_t]\|^2 = e_L +e_H \leq \frac{m}{n}\frac{\sigma_t^2}{\alpha_t^2} + \frac{\eta}{n} \operatorname{Tr}(\mathbf{\Sigma})
     \end{equation} 
     The relative error is bounded by:
     \begin{equation} 
         \hat{e} = \frac{e}{\operatorname{Tr}({\bf \Sigma})} \leq \frac{m}{n\operatorname{Tr}({\bf \Sigma})}\frac{\sigma_t^2}{\alpha_t^2} + \frac{\eta}{n}
     \end{equation} 
\end{theorem}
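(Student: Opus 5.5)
Since $y_0\sim\mathcal{N}(0,\mathbf{\Sigma})$ and $y_t=\alpha_t y_0+\sigma_t\xi$ with $\xi$ independent of $y_0$, the pair $(y_0,y_t)$ is jointly Gaussian. The posterior mean is therefore the linear MMSE estimator, and its error covariance has the closed form
\begin{equation*}
\mathbf{E}_t = \mathbf{\Sigma}-\alpha_t^2\mathbf{\Sigma}\left(\alpha_t^2\mathbf{\Sigma}+\sigma_t^2\mathbf{I}\right)^{-1}\mathbf{\Sigma} = \left(\mathbf{\Sigma}^{-1}+\tfrac{\alpha_t^2}{\sigma_t^2}\mathbf{I}\right)^{-1}.
\end{equation*}
The second form holds when $\mathbf{\Sigma}$ is invertible; the singular case follows by continuity, or by working directly with the first form. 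My plan is to write
\begin{equation*}
e_L=\tfrac1n\operatorname{Tr}(\mathbf{P}_L\mathbf{E}_t), \qquad e_H=\tfrac1n\operatorname{Tr}(\mathbf{P}_H\mathbf{E}_t),
\end{equation*}
which uses $\mathbf{P}^2=\mathbf{P}$ and the cyclicity of the trace, and then bound each trace by a matrix inequality on $\mathbf{E}_t$. The identity $e=e_L+e_H$ follows from $\mathbf{P}_L+\mathbf{P}_H=\mathbf{I}$ together with orthogonality of $\mathbf{P}_L$ and $\mathbf{P}_H$. The equality $\frac1n\mathbb{E}\|x_0-\mathbb{E}[x_0|x_t]\|^2=\frac1n\mathbb{E}\|y_0-\mathbb{E}[y_0|y_t]\|^2$ follows from the orthogonality of $\mathbf{U}$ and \eqref{eq:optimal_freq}.

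The key step is the two-sided Loewner bound
\begin{equation*}
0\preceq \mathbf{E}_t\preceq \mathbf{\Sigma}, \qquad \mathbf{E}_t\preceq \tfrac{\sigma_t^2}{\alpha_t^2}\mathbf{I}.
\end{equation*}
The first bound holds because $\mathbf{\Sigma}^{-1}+\frac{\alpha_t^2}{\sigma_t^2}\mathbf{I}\succeq\mathbf{\Sigma}^{-1}$ and matrix inversion is order-reversing. Equivalently, it is the estimation-theoretic fact that the MMSE never exceeds the error of the trivial estimator $0$. The second bound holds because $\mathbf{\Sigma}^{-1}+\frac{\alpha_t^2}{\sigma_t^2}\mathbf{I}\succeq \frac{\alpha_t^2}{\sigma_t^2}\mathbf{I}$. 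Equivalently, the MMSE is at most the error of the unbiased estimator $y_t/\alpha_t$, whose error covariance is $\frac{\sigma_t^2}{\alpha_t^2}\mathbf{I}$. I would present the proof through these comparison estimators, because that route avoids any invertibility issue.

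Given the Loewner bounds, the conclusions follow by taking traces against the projections. Sandwiching preserves Loewner order, so $\mathbf{P}_L\mathbf{E}_t\mathbf{P}_L\preceq\frac{\sigma_t^2}{\alpha_t^2}\mathbf{P}_L$. Taking traces gives $\operatorname{Tr}(\mathbf{P}_L\mathbf{E}_t)\le \frac{\sigma_t^2}{\alpha_t^2}m$, hence $e_L\le\frac mn\frac{\sigma_t^2}{\alpha_t^2}$. In the same way, $\operatorname{Tr}(\mathbf{P}_H\mathbf{E}_t)\le\operatorname{Tr}(\mathbf{P}_H\mathbf{\Sigma})\le\eta\operatorname{Tr}(\mathbf{\Sigma})$ by the low-frequency-dominance assumption, so $e_H\le\frac{\eta}{n}\operatorname{Tr}(\mathbf{\Sigma})$. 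Adding the two bounds gives \eqref{eq:cov}, and dividing by $\operatorname{Tr}(\mathbf{\Sigma})$ gives the relative bound.

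The only delicate point is justifying $\mathbf{E}_t\preceq\mathbf{\Sigma}$ and $\mathbf{E}_t\preceq\frac{\sigma_t^2}{\alpha_t^2}\mathbf{I}$ rigorously, including the case of singular $\mathbf{\Sigma}$. The comparison-estimator argument handles this cleanly. For any estimator $g(y_t)$ and any unit vector $v$, the MMSE property gives
\begin{equation*}
v^\top\mathbf{E}_t v=\mathbb{E}\bigl(v^\top(y_0-\mathbb{E}[y_0|y_t])\bigr)^2\le \mathbb{E}\bigl(v^\top(y_0-g(y_t))\bigr)^2,
\end{equation*}
since $v^\top\mathbb{E}[y_0|y_t]=\mathbb{E}[v^\top y_0|y_t]$ is the optimal estimate of $v^\top y_0$. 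Taking $g\equiv 0$ gives the first bound, and $g(y_t)=y_t/\alpha_t$ gives the second. Note that neither step actually needs Gaussianity, only finite second moments.
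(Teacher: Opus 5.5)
Your proposal is correct. Its outer skeleton matches the paper's: write $e_L$ and $e_H$ as $\frac1n\operatorname{Tr}(\mathbf{P}_L\,\cdot)$ and $\frac1n\operatorname{Tr}(\mathbf{P}_H\,\cdot)$ of the error covariance, bound that matrix in the Loewner order by $\frac{\sigma_t^2}{\alpha_t^2}\mathbf{I}$ and by $\mathbf{\Sigma}$, and add. The difference is in how you prove the two Loewner bounds (the paper's Lemma~\ref{lemma:2}).

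The paper first computes the Gaussian posterior explicitly (Lemma~\ref{lemma:1}), $\operatorname{Cov}[y_0|y_t]=\sigma_t^2\mathbf{\Sigma}(\alpha_t^2\mathbf{\Sigma}+\sigma_t^2\mathbf{I})^{-1}$. It then diagonalizes $\mathbf{\Sigma}$ and applies $0\le f(\nu)=\frac{\sigma_t^2\nu}{\alpha_t^2\nu+\sigma_t^2}\le\min(\nu,\sigma_t^2/\alpha_t^2)$ to each eigenvalue. Your main argument instead compares the conditional mean, via its $L^2$-optimality, with the two trivial estimators $0$ and $y_t/\alpha_t$.

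The spectral route buys an exact formula, $e=\frac1n\sum_i f(\nu_i)$. This shows the bound $\nu_i$ is nearly attained when $\alpha_t^2\nu_i\ll\sigma_t^2$ and the bound $\sigma_t^2/\alpha_t^2$ when $\alpha_t^2\nu_i\gg\sigma_t^2$. Your route gives up exactness but is shorter and strictly more general. It needs only that $y_0$ has mean zero and covariance $\mathbf{\Sigma}$ and that $\mathbb{E}[\xi\xi^\top]=\mathbf{I}$. So the theorem covers the multimodal action distributions that motivate diffusion policies, not just the Gaussian surrogate the paper adopts for analytical convenience.

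For that generalization, define $E_t$ as the error second-moment matrix $\mathbb{E}\bigl[(y_0-\mathbb{E}[y_0|y_t])(y_0-\mathbb{E}[y_0|y_t])^\top\bigr]$, not by your Gaussian closed form. If you also drop the zero-mean assumption, compare with the constant $\mathbb{E}[y_0]$ instead of $0$.
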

\begin{proof}
    See App.~\ref{app:proof}.
\end{proof}
\begin{remark}
    The above theorem shows that {\bf the more concentrated the energy of the data is in the DCT spectrum, i.e., the smaller $\frac{m}{n}$ and $\eta$, the smaller the average per-frame error of the optimal estimator}. 
\end{remark}
For a $T$-step DDIM sampler, its final output can be interpreted as a one-step estimate of $x_0$ starting from $t=\frac{1}{T}$. This yields the following corollary for the MSE of $T$-step sampling.
\begin{corollary}
    The upper bound on the relative error $\hat{e}_T$ of a $T$-step DDIM sampler can be estimated as follows:
    \begin{equation} \label{eq:total_bound}
        \hat{e}_T \lesssim \frac{m}{n\operatorname{Tr}({\bf \Sigma})}\frac{\sigma_{T^{-1}}^2}{\alpha_{T^{-1}}^2} + \frac{\eta}{n}
    \end{equation} 
\end{corollary}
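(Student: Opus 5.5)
The corollary follows from Theorem~\ref{th:1} once the $T$-step DDIM sampler is identified with a one-step optimal estimate at the smallest sampled time. I will use a continuous time grid $t\in[0,1]$ with DDIM steps $1 = t_T > t_{T-1} > \dots > t_1 = \frac{1}{T}$. I will also assume the learned $x_0$-predictor coincides with the optimal denoiser $\mathbb{E}[x_0\mid x_t]$.

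\textbf{Step 1: reduce DDIM to a last-step estimate.} In the $x_0$-parameterization, each deterministic DDIM update has the form
\[
x_{t_{k-1}} = \alpha_{t_{k-1}}\hat{x}_0(x_{t_k}) + \sigma_{t_{k-1}}\hat{\epsilon}(x_{t_k}).
\]
The returned sample is the final prediction $\hat{x}_0(x_{t_1})$ made from the state at $t_1 = T^{-1}$. So the output error is the error of a single denoising call at noise level $(\alpha_{T^{-1}}, \sigma_{T^{-1}})$, applied to the state the sampler has produced.

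\textbf{Step 2: the modeling approximation.} Treat $x_{T^{-1}}$ as distributed like the forward marginal $\alpha_{T^{-1}}x_0 + \sigma_{T^{-1}}\epsilon$. For the Gaussian data model with the exact score, this holds exactly for the probability-flow ODE and approximately for the DDIM discretization; the gap is what the symbol $\lesssim$ absorbs. I expect this to be the main obstacle. A rigorous version would need either a bound on the DDIM discretization error between the marginals, or an argument in the linear-Gaussian case: there every map is linear, so the law of $x_{t_1}$ is Gaussian with a computable covariance close to $\alpha^2\mathbf{\Sigma} + \sigma^2\mathbf{I}$. I would state it as an assumption and not try to prove it.

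\textbf{Step 3: apply the theorem.} Under this identification, the MSE of the output equals $e$ at $t = T^{-1}$. Theorem~\ref{th:1} gives
\[
e \le \frac{m}{n}\frac{\sigma_{T^{-1}}^2}{\alpha_{T^{-1}}^2} + \frac{\eta}{n}\operatorname{Tr}(\mathbf{\Sigma}).
\]
Dividing by $\operatorname{Tr}(\mathbf{\Sigma})$ yields \eqref{eq:total_bound}.

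\textbf{Interpretation.} The first term decays in $T$ and is already small at $T = 2$ when $m/n$ is small. The second term $\eta/n$ does not depend on $T$, which is the saturation the corollary is meant to show.
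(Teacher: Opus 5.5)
Your proposal is correct and follows the paper's own argument. The paper interprets the output of a $T$-step DDIM sampler as a one-step posterior-mean estimate of $x_0$ from $t = 1/T$, then reads off the relative-error bound of Theorem~\ref{th:1} at that noise level, exactly as in your Steps 1 and 3. Your Step 2 states openly what the paper hides in the word ``interpreted'' and the symbol $\lesssim$: that the sampler's state at $t = 1/T$ is paired with $x_0$ as in the forward process, and that the learned predictor is the optimal denoiser.
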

As a concrete example, substituting the data statistics from Sec.~\ref{sec:motivation}, namely $m=1$, $n=32$, and $\eta=0.07$, into Eq.~\eqref{eq:total_bound}, and replacing $\Sigma$ with the empirical covariance matrix of the trajectory in the frequency domain under the cosine noise schedule, we obtain the relative error bound $\hat{e}_T \leq 0.0023 \frac{\sigma_{T^{-1}}^2}{\alpha_{T^{-1}}^2} + 7\times10^{-5}$. Evaluating this bound at $T=2$ gives an approximate upper bound of $\hat{e}_2 \lesssim 0.25\%$, suggesting that for robot action data, two denoising steps already introduce only negligible approximation error. In addition, we find that, for low-frequency-dominant data, the error under few-step decoding mainly comes from the high-frequency band, and this component can be further suppressed by the low-level controller, making the resulting closed-loop performance nearly indistinguishable from that of multi-step decoding (See App.~\ref{app:toy} for details). 
\begin{remark}
    We can also explain the effectiveness of few-step sampling from the perspective of reduced crossings among interpolation trajectories: Eq.~\ref{eq:cov} is in fact the posterior variance $\operatorname{Var}(x_0|x_t)$, and a smaller value indicates a more concentrated and stable prediction target. As a result, the sampling trajectory follows a more nearly unique deterministic direction, making few-step sampling more feasible\citep{yang2026stable,liu2022flow}.
\end{remark}
The synthetic data experiments in Sec.~\ref{sec:sde} and the rollout evaluations in Sec.~\ref{sec:aba} further support that two-step decoding is sufficient in most cases.



\subsection{Lightweight Decoder for Minimal Denoising} \label{sec:dim}
Recent theory on diffusion models shows that, when data are concentrated on a low-dimensional subspace, the complexity of score approximation is governed primarily by the intrinsic, rather than ambient, dimension of the data\citep{chen2023score}. Combined with the strong low-frequency concentration of robot action trajectories observed above, this suggests that the score field to be learned for action denoising is substantially simpler than that of high-dimensional perceptual signals. Therefore, instead of relying on a heavy high-capacity decoder, we seek a lightweight architecture that is better matched to the intrinsic complexity of robot action data. Specifically, we adopt MLP-Mixer\citep{tolstikhin2021mlp} as a simple yet effective decoder backbone. Using only lightweight transpose operations and MLP-based projections, MLP-Mixer enables efficient interaction across both temporal and channel dimensions while maintaining stable training. As shown in Fig.~\ref{fig:arch}, given the token sequence $H$ obtained by projecting the noisy trajectory, we first transpose it and feed it into a temporal MLP to aggregate information along the time dimension. We then transpose it back and add a residual connection. Next, we apply the same procedure along the channel dimension, yielding a representation that integrates both temporal and channel-wise information:
\begin{equation}
    H = H+{\rm MLP}(H^{\rm T})^{\rm T};\quad H = H + {\rm MLP}(H)
\end{equation}
Finally, we inject the conditioning information into the model using a FiLM layer combined with a gated residual connection:
\begin{gather}
    \alpha, \beta, g = {\rm MLP}(H, C) \\
    \tilde{H} = H \odot \alpha + \beta \\
    H = \tilde{H} \odot g  + H
\end{gather}
After stacking $K$ such blocks, we obtain the final tokens in which the conditioning information is fully mixed, which are then used to predict the denoised actions:
\begin{equation}
    \hat{x}_0 = {\rm Linear}(H)
\end{equation}
Unlike U-Net and DiT, which rely on large intermediate dimensions to improve denoising capacity, our design substantially enhances the efficiency of information fusion, allowing us to surpass previous state-of-the-art performance using {\bf less than 1\%} of the parameters of prior methods.

\begin{figure}[t]
    \centering
    \includegraphics[width=\linewidth]{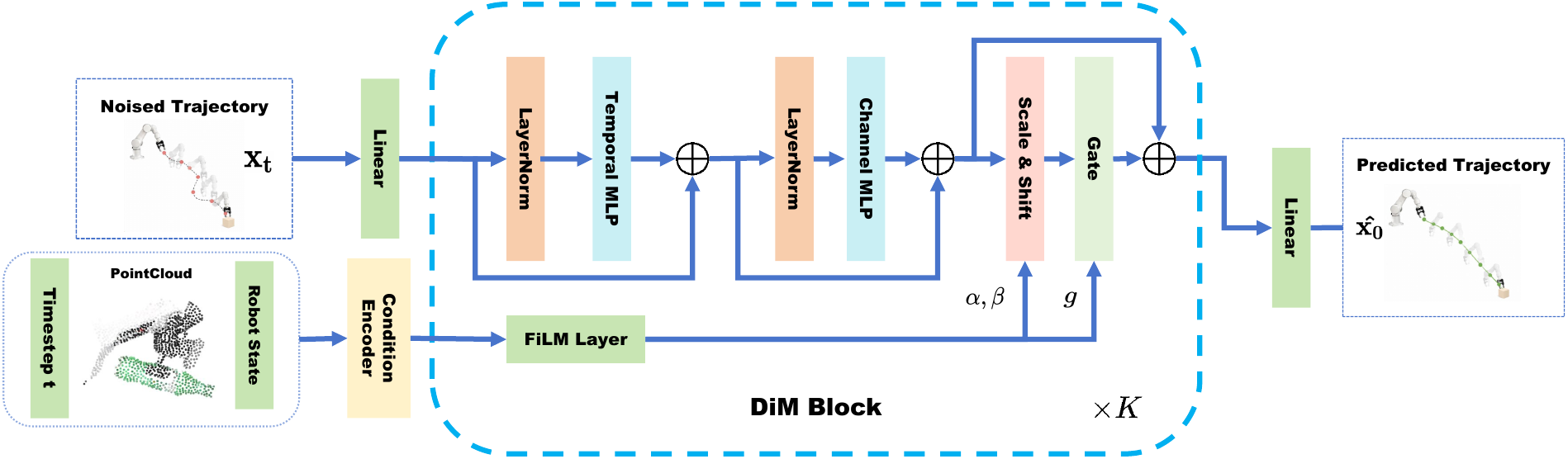}
    \caption{\textbf{Overall architecture of the Proposed Method.} 
      In the figure, $\rm T$ denotes transpose. We adopt the efficient point-cloud encoder from DP3\citep{ze20243d} and stacks $K$ DiM blocks as the decoder. Each DiM block is built upon an MLP-Mixer style\citep{tolstikhin2021mlp} architecture, enabling efficient information fusion with a small parameter budget, thereby improving decision-making performance.}
    \label{fig:arch}
\end{figure}

\section{Synthetic Data Experiments} \label{sec:sde}
\captionsetup{font=footnotesize}
\begin{figure}[htbp]
    \centering
    \begin{minipage}{0.28\textwidth}
        \centering
        \includegraphics[width=\linewidth]{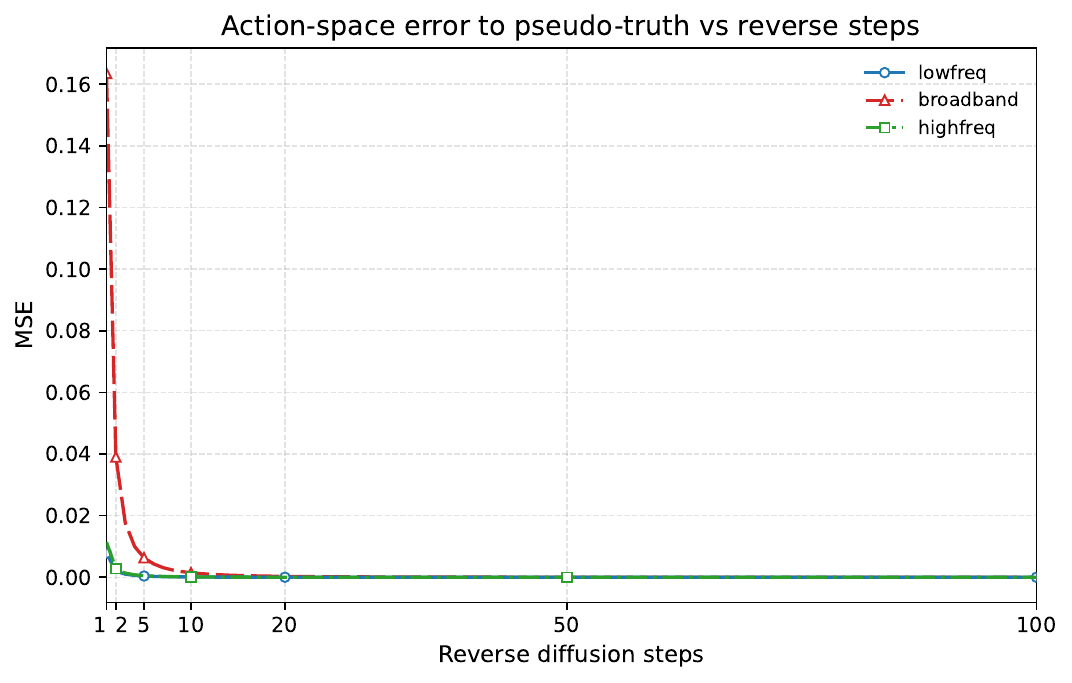}
        \caption{\scriptsize MSE Under Different NFEs}
        \label{fig:mse}
    \end{minipage}
    \hfill
    \begin{minipage}{0.70\textwidth}
        \centering
        \includegraphics[width=\linewidth]{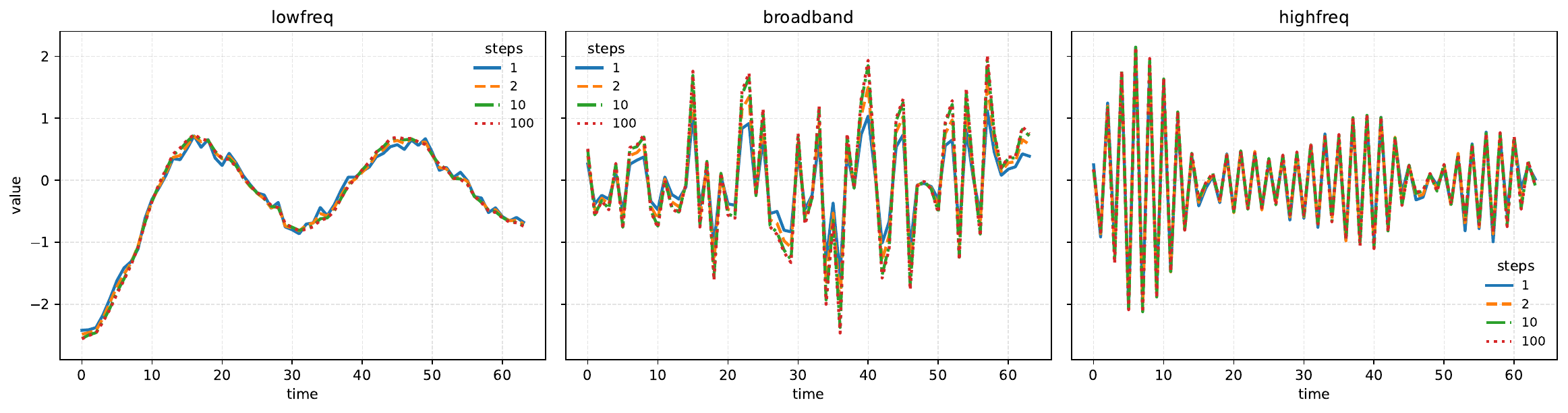}
        \caption{\scriptsize Examples of Decoded Trajectories}
        \label{fig:example}
    \end{minipage}
\end{figure}

To validate the theory in Sec.~\ref{sec:2s}, we design a corresponding synthetic-data experiment. Specifically, we synthesize three classes of trajectories: a low-frequency-dominant (\texttt{lowfreq}) dataset, where the first 6 out of 64 DCT modes account for 99\% of the total energy; a \texttt{broadband} dataset, where energy is evenly distributed across all 64 modes; and a high-frequency-dominant (\texttt{highfreq}) dataset, where the last 6 out of 64 DCT modes account for 99\% of the total energy. We normalize all datasets to a common scale to ensure a fair comparison in training and evaluation. We then train identical CNN-based diffusion models on these data and evaluate the MSE under different numbers of decoding (reverse) steps, taking the 100-step decoding result as the reference. Detailed
descriptions of data synthesis, training and evaluation protocols, and additional results are provided in App.~\ref{app:toy}.

The results are shown in Fig.~\ref{fig:mse}, Fig.~\ref{fig:example}. As can be seen, for the same number of decoding steps, the \texttt{lowfreq}/\texttt{highfreq} trajectories achieve substantially lower denoising error than the \texttt{broadband} trajectories. Moreover, the error on \texttt{lowfreq}/\texttt{highfreq} nearly saturates after just two reverse steps, whereas \texttt{broadband} requires many more decoding steps to reach comparable performance. We further observe that, under few-step decoding, most of the remaining error for \texttt{lowfreq} lies in the high-frequency band. In practice, the robot’s low-level controller further filters out this high-frequency error, making two-step decoding nearly indistinguishable from using more reverse steps. See App.~\ref{app:toy} for details.

\begin{table*}[t]
\centering
\caption{Evaluation on the Robotwin2.0 benchmark. Each task is tested across 100 randomly generated scenes using 100 different seeds.}
\label{tab:robotwin2}

\small
\setlength{\tabcolsep}{4.0pt}
\renewcommand{\arraystretch}{1.15}
\newcommand{\task}[2]{\shortstack{\scriptsize #1\\\scriptsize #2}}

\begin{tabularx}{\textwidth}{@{}l *{8}{>{\centering\arraybackslash}X}@{}}
\toprule
\textbf{Method} & \task{Open}{Microwave} & \task{Hanging}{Mug} & \task{Move Pillbottle}{Pad} & \task{Stamp}{Seal} & \task{Beat Block}{Hammer} & \task{Place Phone}{Stand} & \ldots &
\task{Average}{(50 tasks)} \\
\midrule
DP & 5.0 & 8.0 & 1.0 & 2.0 & 42.0 & 13.0 & \ldots & 28.0 \\
DP3 & \underline{61.0} & \underline{17.0} & \underline{41.0} & 18.0 & 72.0 & 44.0 & \ldots & \underline{55.2} \\
Flow Policy & 17.0 & 6.0 & 14.0 & 20.0 & \underline{75.0} & 44.0 & \ldots & 41.1 \\
MP1 & 35.0 & 11.0 & 35.0 & \underline{28.0} & 68.0 & \underline{52.0} & \ldots & \underline{55.2} \\
\rowcolor{oursblue}
HDP3 \textbf{(Ours)} & \textbf{92.0}\,{\scriptsize(+31.0)} & \textbf{35.0}\,{\scriptsize(+18.0)} & \textbf{56.0}\,{\scriptsize(+15.0)} & \textbf{43.0}\,{\scriptsize(+15.0)} &
\textbf{88.0}\,{\scriptsize(+13.0)} & \textbf{65.0}\,{\scriptsize(+13.0)} & \ldots & \textbf{63.2}\,{\scriptsize(+8.0)} \\
\bottomrule
\end{tabularx}
\end{table*}

\begin{table*}[t]
\centering
\caption{Evaluation on the Adroit and MetaWorld benchmark. Each task is trained and tested across 3 different seeds. Unavailable results are denoted by ``--''.}
\label{tab:am}

\footnotesize
\setlength{\tabcolsep}{4pt}
\renewcommand{\arraystretch}{1.12}

\resizebox{\textwidth}{!}{%
\begin{tabular}{@{}l*{11}{c}@{}}
\toprule
\textbf{Method}
& \textbf{Hammer} & \textbf{Door} & \textbf{Pen}
& \textbf{Assembly} & \textbf{Disassemble} & \textbf{Hand-Insert}
& \textbf{Pick-Place-Wall} & \textbf{Push} & \textbf{Reach-Wall} & \textbf{Stick-Push}
& \textbf{Avg.} \\
\midrule

BCRNN
& 0.0 & 0.0 & 9.0
& 3.0 & 32.0 & --
& -- & -- & -- & --
& 8.8 \\
IBC
& 0.0 & 0.0 & 9.0
& 0.0 & 1.0 & --
& -- & -- & -- & --
& 2.0 \\
DP
& 45.0 & 37.0 & 13.0
& 15.0 & 43.0 & 9.0
& 5.0 & 11.0 & 59.0 & \underline{63.0}
& 30.0 \\

DP3
& \textbf{100.0} & \underline{62.0} & 43.7
& \underline{99.6} & 75.0 & \underline{25.3}
& 82.7 & 71.3 & 70.7 & \textbf{100.0}
& 73.0 \\

Flow Policy
& \textbf{100.0} & \textbf{65.3} & \underline{48.2}
& 92.7 & 60.7 & 18.6
& \underline{88.3} & \underline{79.7} & 72.7 & \textbf{100.0}
& 72.6 \\

MP1
& \textbf{100.0} & 56.5 & 44.0
& \textbf{100.0} & \textbf{93.7} & 19.3
& \textbf{91.3} & 79.3 & \underline{82.7} & \textbf{100.0}
& \underline{76.7} \\

\rowcolor{oursblue}
HDP3 \textbf{(Ours)}
& \textbf{100.0} & 57.3 & \textbf{48.7}
& \textbf{100.0} & \underline{87.7} & \textbf{33.3}
& \underline{88.7} & \textbf{83.0} & \textbf{85.0} & \textbf{100.0}
& \textbf{78.4} \\
\bottomrule
\end{tabular}
}

\vspace{-0.5mm}
\end{table*}

\vspace{-7pt}
\section{Simulation Experiments} \label{sec:exp}
\subsection{Experimental Setup}
\textbf{Simulation Benchmarks.}
We evaluate our method on three widely used benchmarks: RoboTwin2.0\citep{chen2025robotwin}, Adroit\citep{rajeswaran2017learning}, and MetaWorld\citep{yu2020meta}. RoboTwin2.0 focuses on dual-arm manipulation with diverse assets and scripted data generation. Adroit targets high-dimensional dexterous control for precise, long-horizon skills, while MetaWorld offers tiered single-arm tasks\citep{seo2023masked}. Collectively, these benchmarks cover both single- and dual-arm settings across varying dimensionalities, ensuring a comprehensive evaluation.

\textbf{Training and Evaluation Details.}
For a fair comparison, we follow the official training and evaluation protocols for all benchmarks. 
\begin{itemize}
    \item For RoboTwin2.0, we train the models using 50 expert demonstrations. We evaluate each task on 100 randomly generated scenes and report the success rate over these 100 evaluation episodes.

    \item  For Adroit and MetaWorld, we use 10 expert demonstrations to assess the data efficiency of the methods. We run three independent trials with random seeds ${0,1,2}$. During training, we evaluate the policy every 200 epochs; each evaluation consists of 20 rollouts per task, from which we compute the success rate. For each seed, we track the success rate over training and define ${\rm SR}_5$ as the average of the top five success rates; we report the average ${\rm SR}_5$ across the three seeds for all methods.
\end{itemize}

We adopt a DDIM\citep{song2020denoising} noise scheduler with 100 diffusion steps during training and 2 steps at inference, and optimize using AdamW\citep{loshchilov2017decoupled} with an initial learning rate of $1\times10^{-4}$ and a cosine decay schedule. The hidden dimension and depth of the DiM block are set to 128 and 6, respectively. To stabilize training, both actions and robot states are normalized to $[-1,1]$. All models are trained for 3{,}000 epochs on a single NVIDIA RTX 5880 Ada GPU, using a batch size of 256 for RoboTwin 2.0 and 128 for Adroit and MetaWorld. Detailed hyperparameter settings are provided in Appendix~\ref{app:impl}.

\textbf{Baselines.}
On RoboTwin2.0, we compare our method with the sota diffusion-based approaches, namely DP3\citep{ze20243d}, DP\citep{chi2025diffusion}, Flow Policy\citep{zhang2025flowpolicy} and MP1\citep{sheng2026mp1}.
On Adroit and MetaWorld, we include all the aforementioned diffusion-based methods, and further compare against two commonly used imitation-learning baselines, IBC\cite{florence2022implicit} and BCRNN\cite{mandlekar2021matters}.

\subsection{Comparison with State-of-the-art Methods}
\begin{wraptable}{r}{0.48\textwidth}
\vspace{-4mm}
\centering
\caption{\textbf{Model size and inference latency comparison.} All evaluations are conducted on a single RTX~5880 Ada GPU with the batch size set to 1.}
\label{tab:efficiency}
\footnotesize
\setlength{\tabcolsep}{4pt}
\renewcommand{\arraystretch}{1.1}
\begin{tabularx}{0.48\textwidth}{@{}Xccc@{}}
\toprule
\textbf{Method} & \textbf{Params (M)$\downarrow$} & \textbf{NFE} & \textbf{Latency (ms)$\downarrow$} \\
\midrule
DP & 85.6 & 100 & 460 \\
DP3 & 255.8 & 10 & 51.4 \\
Flow Policy & 255.8 & \textbf{1} & 7.04 \\
MP1 & 255.8 & \textbf{1} & 7.02 \\
HDP3 & \textbf{2.52} & 2 & \textbf{4.50} \\
\bottomrule
\end{tabularx}
\vspace{-3mm}
\end{wraptable}
\textbf{Model Size and Inference Latency.}
We measure the model size and inference latency of several diffusion-based policies; the results are reported in Tab. \ref{tab:efficiency}. Thanks to two-step inference and a compact model size, our HDP3 substantially reduces inference latency compared to the baseline method, DP3, while maintaining strong performance (as shown below). Moreover, while Flow Policy and MP1 achieve impressive latency via 1-step inference, it retains a massive parameter count ($\sim$256M). HDP3 achieves comparable low latency ($\sim$4.50ms) but with \textbf{two orders of magnitude fewer parameters}, significantly lowering the memory footprint for on-device deployment.

\textbf{RoboTwin 2.0.}
As shown in Tab.~\ref{tab:robotwin2}, HDP3 achieves the best overall performance on RoboTwin2.0, reaching an average success rate of \textbf{63.2\%} over 50 tasks. With only \textbf{two inference steps}, HDP3 outperforms the previous best baseline, DP3 (\underline{55.2\%}) and MP1 (\underline{55.2\%}), by \textbf{8.0 percentage points}, while also surpassing Flow Policy (40.9\%). The six tasks shown in Tab.~\ref{tab:robotwin2} correspond to the largest margins over the runner-up method: \textit{Open Microwave}(+31.0), \textit{Hanging Mug}(+18.0), \textit{Move Pillbottle Pad} (+15.0), \textit{Stamp Seal}(+15.0), \textit{Beat Block Hammer}(+13.0), and \textit{Place Phone Stand}(+13.0). These results highlight the
consistent advantage of HDP3 across diverse manipulation tasks.

\textbf{Adroit and MetaWorld.}
Tab.~\ref{tab:am} compares HDP3 with prior methods across 10 manipulation tasks on Adroit and MetaWorld. HDP3 achieves the best overall performance with \textbf{78.4\%} average success, outperforming the strongest baseline MP1 (\underline{76.7\%}) by \textbf{1.7 percentage points}, and exceeding DP3 (73.0\%) and Flow Policy (72.6\%) by 5.4 and 5.8 percentage points, respectively. HDP3 attains the best performance on \textit{Pen}, \textit{Hand-Insert}, \textit{Push}, and \textit{Reach-Wall}, while matching the best result on \textit{Hammer}, \textit{Assembly}, and \textit{Stick-Push}. Despite using an extremely small number of parameters and only two inference steps, HDP3 maintains---and in some cases surpasses---the performance of prior state-of-the-art methods.

\subsection{Ablation Studies} \label{sec:aba}
\begin{table}[h]
\centering
\caption{\textbf{Ablation on design choices in HDP3.} To ensure a fair comparison, we tune the hidden size for each setting to keep the number of parameters comparable.}
\label{tab:ablation}
\resizebox{\columnwidth}{!}{%
\begin{tabular}{l|c|cc|cc|cc|cc}
\toprule
\textbf{Method} & \textbf{Num.Params.(M)} &
\multicolumn{2}{c|}{\textbf{Door}} &
\multicolumn{2}{c|}{\textbf{Hammer}} &
\multicolumn{2}{c|}{\textbf{Pen}} &
\multicolumn{2}{c}{\textbf{Avg.}} \\
\cmidrule(lr){3-4}\cmidrule(lr){5-6}\cmidrule(lr){7-8}\cmidrule(lr){9-10}
 &  & \textbf{${\rm SR}_5$} & Loss & \textbf{${\rm SR}_5$} & Loss & \textbf{${\rm SR}_5$} & Loss & \textbf{${\rm SR}_5$} & Loss \\
\midrule
HDP3 & 2.52 &
\textbf{54.3} & 6e-4 &
\textbf{100} & 2e-4 &
\textbf{48.7} & 5e-4 &
\textbf{67.7} & 4.3e-4 \\
Vanilla-MLP    & 2.52 &
0.0 & 3e-2 &
0.0 & 1e-2 &
0.0 & 3e-2 &
0.0 & 2.3e-2 \\
Vanilla-UNet   & 2.64 &
25.7 & 3e-3 &
93.3 & 3e-4 &
42.0 & 1e-3 &
53.7 & 1.4e-3 \\
\bottomrule
\end{tabular}}
\vspace{-0.2cm}
\end{table}

\textbf{Necessity of the Mixer-based Architecture.}
To validate the design of HDP3, we conduct an ablation study that compares different architectures under the same parameter budget. The results are summarized in Tab.~\ref{tab:ablation}. Here, Vanilla-MLP denotes a pure MLP architecture with FiLM-based conditioning, while Vanilla-UNet is obtained by reducing the intermediate channel width in the decoder of the original U-Net. We observe that the pure MLP baseline achieves an average ${\rm SR}_5$ of $0$ across all three tasks. Specifically, the training loss for Vanilla-MLP converged extremely poorly, indicating that a plain MLP is far from sufficient for robotic manipulation. Notably, our DiM block---which augments the MLP with residual connections and a temporal fusion module (incurring almost no additional parameters)---substantially improves performance. 
Furthermore, the U-Net variant with reduced channel width suffers a significant performance drop compared to the original DP3 and is substantially outperformed by our HDP3. These results suggest that, under a strict parameter budget, the DiM block offers superior parameter efficiency and information fusion capabilities compared to standard U-Net architectures.

\textbf{Impact of the Number of Function Evaluations (NFE).}
We further demonstrate that two-step decoding is sufficient for robot trajectory generation through rollout success-rate evaluations on three Adroit tasks: \textit{Door}, \textit{Hammer}, and \textit{Pen}. Specifically, we evaluate the final checkpoint after 3000 training epochs. For each NFE setting, we run 5{,}000 evaluation episodes to reduce statistical variance and report the mean score. The results are shown in Tab.~\ref{tab:ablation_nfe}. As can be seen, using only one inference step leads to a noticeable performance drop compared to multi-step settings. However, two-step inference already achieves strong performance, and additional steps (e.g., 5 or 10) do not yield further gains. This indicates that multi-step decoding is not necessary, and that two-step decoding is already sufficient for closed-loop execution.


\vspace{-7pt}
\begin{table}[h]
\centering
\caption{\textbf{Ablation on NFE.} We report the average success rate (\%) over 5,000 randomly generated scenes for each task using the final checkpoint (3,000 epochs). \textit{NFE} denotes the Number of
Function Evaluations.}
\label{tab:ablation_nfe}
\small
\renewcommand{\arraystretch}{1.05}
\begin{tabular*}{\columnwidth}{@{\extracolsep{\fill}}lcccccccc@{}}
\toprule
& \multicolumn{4}{c}{\textbf{HDP3}} & \multicolumn{4}{c}{\textbf{DP3}} \\
\cmidrule(lr){2-5} \cmidrule(lr){6-9}
\textbf{NFE} & \textbf{Door} & \textbf{Hammer} & \textbf{Pen} & \textbf{Avg.}
           & \textbf{Door} & \textbf{Hammer} & \textbf{Pen} & \textbf{Avg.} \\
\midrule
1  & 34.2 & 91.5  & 33.4 & 53.0 & 44.6 & 100.0 & 27.8 & 57.5 \\
2  & 37.4 & 100.0 & 37.2 & 58.2 & 47.4 & 100.0 & 30.5 & 59.3 \\
5  & 37.7 & 100.0 & 35.5 & 57.7 & 46.1 & 100.0 & 31.7 & 59.3 \\
10 & 36.8 & 100.0 & 37.3 & 58.0 & 47.8 & 100.0 & 31.8 & 59.9 \\
\bottomrule
\end{tabular*}
\vspace{-3.0mm}
\end{table}


\subsection{Real-world Experiments}
\textbf{Experiment Settings.}
We validate the effectiveness of our method on an AgileX Piper robot. Real-world visual observations are captured with a single Intel RealSense D455 camera mounted globally. The model is deployed on an NVIDIA RTX 4060 GPU for on-board action inference. We collect expert demonstrations via teleoperation using a leader-follower setup, yielding 50 trajectories for training. The action space is defined as 6-DoF joint positions. See App.~\ref{app:impl} for more implementation details.

\textbf{Results.}
We report the success rates across 15 real-world trials in Tab. \ref{tab:real_sr}. Consistent with our simulation findings, HDP3-base exhibits strong sim-to-real transferability. It outperforms the DP3 baseline by 15.3\% on average, validating that the efficiency gains from the DiM decoder do not come at the cost of generalization capability, even under the constraints of real-world noise.




\vspace{-7pt}
\begin{table}[h]

    \centering
    \caption{\textbf{Real-world Experiments Success Rate.}}
    \label{tab:real_sr}
    \renewcommand{\arraystretch}{1.15}
    \setlength{\tabcolsep}{8pt}
    \begin{tabular}{lcc}
        \toprule
        Task &  DP3 & HDP3 \textbf{(Ours)} \\
        \midrule
        Place Object &  53.3 & \textbf{73.3} (\textbf{$\uparrow$20.0}) \\
        Adjust Bottle &  33.3 & \textbf{46.7} (\textbf{$\uparrow$13.4}) \\
        Stack Blocks Two &  6.7 & \textbf{20} (\textbf{$\uparrow$13.3}) \\
        \bottomrule
    \end{tabular}
    \vspace{-3.0mm}
\end{table}

\section{Conclusion}
In this work, we introduced \textbf{HDP3}, a compact yet powerful 3D visuomotor policy. Motivated by the insight that robot trajectories are strongly dominated by low-frequency components in the frequency domain, we provide a theoretical analysis showing that two-step decoding does not affect closed-loop execution, and further replace the parameter-heavy conditional decoder with a lightweight MLP-Mixer-based DiM block. Synthetic-data experiments validate the theory-driven intuition, while simulation results show that HDP3 achieves state-of-the-art performance with less than 1\% of the parameters of prior methods. Real-world experiments further demonstrate the practical applicability of the proposed method. For limitations, please refer to App.~\ref{app:lim}.

\newpage
\bibliographystyle{plain}
\bibliography{nips}

\newpage
\appendix

\section{Proof of Theorem~\ref{th:1}} \label{app:proof}
We first derive the closed-form expression of the optimal estimator in the frequency domain, given in the following lemma.
\begin{lemma} \label{lemma:1}
     Suppose that $y_0 \sim \mathcal{N}(\mathbf{0}, \mathbf{\Sigma})$ and $y_t = \alpha_t y_0 + \sigma_t \xi$, where $\xi \sim \mathcal{N}(\mathbf{0}, \mathbf{I})$. Then, the posterior distribution is Gaussian, with mean and covariance given by:
     \begin{gather}
         \mathbb{E}[y_0|y_t]=\alpha_t {\bf \Sigma}(\alpha_t^2 {\bf \Sigma} + \sigma_t^2 {\bf I})^{-1}y_t \\
         \operatorname{Cov}[y_0|y_t]= \sigma_t^2 {\bf \Sigma} (\alpha_t^2 {\bf \Sigma} + \sigma_t^2 {\bf I})^{-1}
     \end{gather}
\end{lemma}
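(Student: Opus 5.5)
The plan is to treat this as standard Gaussian conditioning: write $(y_0,y_t)$ as a jointly Gaussian vector and apply the conditional-Gaussian formula.

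\textbf{Joint distribution.} Since $y_0\sim\mathcal{N}(\mathbf{0},\mathbf{\Sigma})$ and $\xi\sim\mathcal{N}(\mathbf{0},\mathbf{I})$ are independent, the stacked vector $(y_0,y_t)$ is a linear image of $(y_0,\xi)$. It is therefore jointly Gaussian with zero mean. Its blocks are:
\begin{gather}
\operatorname{Cov}(y_0)=\mathbf{\Sigma}, \quad \operatorname{Cov}(y_0,y_t)=\mathbb{E}[y_0(\alpha_t y_0+\sigma_t\xi)^\top]=\alpha_t\mathbf{\Sigma}, \\
\operatorname{Cov}(y_t)=\alpha_t^2\mathbf{\Sigma}+\sigma_t^2\mathbf{I}.
\end{gather}
The last matrix is positive definite whenever $\sigma_t>0$, because $\mathbf{\Sigma}\succeq 0$. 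So its inverse exists even if $\mathbf{\Sigma}$ is singular, which is exactly the low-rank, low-frequency case of interest.

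\textbf{Conditioning.} The conditional-Gaussian formula (equivalently, the Schur complement of the joint covariance) says the posterior is Gaussian with
\[
\mathbb{E}[y_0|y_t]=\alpha_t\mathbf{\Sigma}(\alpha_t^2\mathbf{\Sigma}+\sigma_t^2\mathbf{I})^{-1}y_t,
\qquad
\operatorname{Cov}[y_0|y_t]=\mathbf{\Sigma}-\alpha_t^2\mathbf{\Sigma}(\alpha_t^2\mathbf{\Sigma}+\sigma_t^2\mathbf{I})^{-1}\mathbf{\Sigma}.
\]
The mean already has the claimed form. If a self-contained argument is preferred, one can instead note that $y_0-Ay_t$ with $A=\alpha_t\mathbf{\Sigma}(\alpha_t^2\mathbf{\Sigma}+\sigma_t^2\mathbf{I})^{-1}$ is uncorrelated with $y_t$. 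Since everything is jointly Gaussian, it is then independent of $y_t$, which gives both the mean and the fact that the posterior covariance equals $\operatorname{Cov}(y_0-Ay_t)$.

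\textbf{Simplifying the covariance (main step).} It remains to bring the covariance to the stated form. Write $\mathbf{M}=\alpha_t^2\mathbf{\Sigma}+\sigma_t^2\mathbf{I}$. The matrices $\mathbf{\Sigma}$ and $\mathbf{M}^{-1}$ commute, since $\mathbf{M}$ is a polynomial in $\mathbf{\Sigma}$. One can see this by diagonalizing $\mathbf{\Sigma}=\mathbf{V}\mathbf{\Lambda}\mathbf{V}^\top$, or simply from $\mathbf{\Sigma}\mathbf{M}=\mathbf{M}\mathbf{\Sigma}$. Using this,
\[
\mathbf{\Sigma}-\alpha_t^2\mathbf{\Sigma}\mathbf{M}^{-1}\mathbf{\Sigma}
=\mathbf{\Sigma}\mathbf{M}^{-1}(\mathbf{M}-\alpha_t^2\mathbf{\Sigma})
=\sigma_t^2\mathbf{\Sigma}\mathbf{M}^{-1}.
\]
This is the claimed expression. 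The only point needing care is the commutation step. To make it transparent, I would verify it eigenvalue-wise: each eigendirection with eigenvalue $\lambda_i$ has posterior variance $\sigma_t^2\lambda_i/(\alpha_t^2\lambda_i+\sigma_t^2)$. This scalar form is also what the proof of Theorem~\ref{th:1} will use next.
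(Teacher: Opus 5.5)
Your proposal is correct and follows essentially the same route as the paper: write $(y_0,y_t)$ as a linear image of $(y_0,\xi)$, apply the Gaussian conditioning (Schur complement) formula, and simplify via $\mathbf{\Sigma}-\alpha_t^2\mathbf{\Sigma}\mathbf{M}^{-1}\mathbf{\Sigma}=\mathbf{\Sigma}\mathbf{M}^{-1}(\mathbf{M}-\alpha_t^2\mathbf{\Sigma})$. You also correctly take the top-left joint block to be $\mathbf{\Sigma}$ (the paper's displayed $\mathbf{C}$ has it as $\mathbf{I}$, a typo), but the commutation you flag is not needed: the factorization follows from $\mathbf{\Sigma}=\mathbf{\Sigma}\mathbf{M}^{-1}\mathbf{M}$ and distributivity alone.
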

\begin{proof}
     Since $y_0$ and $\xi$ are independent, the pair $(y_0, \xi)$ is jointly Gaussian. Moreover, since we have:
     \begin{equation}
         \begin{bmatrix}
             y_0 \\
             y_t
         \end{bmatrix} = \begin{bmatrix}
             {\bf I} & {\bf 0} \\
             \alpha_t {\bf I} & \sigma_t {\bf I}
         \end{bmatrix}\begin{bmatrix}
             y_0 \\
             \xi
         \end{bmatrix}
     \end{equation}
     Therefore, $(y_0, y_t)$ is also jointly Gaussian and satisfies:
     \begin{gather}
         (y_0, y_t) \sim \mathcal{N}({\bf 0}, {\bf C}) \\
         {\bf C} = \begin{bmatrix}
             {\bf I} & \alpha_t {\bf \Sigma} \\
             \alpha_t {\bf \Sigma} & \alpha_t^2 {\bf \Sigma}+ \sigma
             _t^2 {\bf I}
         \end{bmatrix}
     \end{gather}
     Hence, the posterior distribution is still Gaussian, and its mean and covariance are given by:
     \begin{equation}
         \mathbb{E}[y_0|y_t]=\operatorname{Cov}(y_0,y_t)\operatorname{Cov}(y_t,y_t)^{-1}y_t = \alpha_t {\bf \Sigma}(\alpha_t^2 {\bf \Sigma}+ \sigma_t^2 {\bf I})^{-1}y_t 
     \end{equation}
     Meanwhile,
     \begin{equation}
         \operatorname{Cov}[y_0|y_t]=\operatorname{Cov}(y_0,y_0)-\operatorname{Cov}(y_0,y_t)\operatorname{Cov}(y_t,y_t)^{-1}\operatorname{Cov}(y_t,y_0)
     \end{equation}
     By setting ${\bf A} = \alpha_t^2 {\bf \Sigma}+ \sigma_t^2 {\bf I}$, we obtain:
     \begin{align}
         \operatorname{Cov}[y_0|y_t]&={\bf \Sigma}-\alpha_t^2 {\bf \Sigma} {\bf A}^{-1}{\bf \Sigma} \\
         &= {\bf \Sigma}{\bf A}^{-1} ({\bf A}-\alpha_t^2{\bf \Sigma}) \\
         &= \sigma_t^2 {\bf \Sigma}{\bf A}^{-1} \\
         &= \sigma_t^2 {\bf \Sigma}(\alpha_t^2 {\bf \Sigma}+ \sigma_t^2 {\bf I})^{-1}
     \end{align}
\end{proof}
We next derive bounds on the posterior covariance, as stated in the following lemma.
\begin{lemma} \label{lemma:2}
    Under the Loewner partial order, the posterior covariance satisfies:
    \begin{gather}
        {\bf 0} \preceq  \operatorname{Cov}[y_0|y_t] \preceq {\bf \Sigma} \\
        {\bf 0} \preceq  \operatorname{Cov}[y_0|y_t] \preceq \frac{\sigma_t^2}{\alpha_t^2}{\bf I}
    \end{gather}
\end{lemma}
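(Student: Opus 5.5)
We take the closed form from Lemma~\ref{lemma:1}, $\operatorname{Cov}[y_0|y_t]=\sigma_t^2{\bf \Sigma}{\bf A}^{-1}$ with ${\bf A}=\alpha_t^2{\bf \Sigma}+\sigma_t^2{\bf I}$, and diagonalize it. Since ${\bf \Sigma}$ is symmetric positive semidefinite, write ${\bf \Sigma}={\bf V}{\bf \Lambda}{\bf V}^\top$ with ${\bf V}$ orthogonal and ${\bf \Lambda}=\operatorname{diag}(\lambda_1,\ldots,\lambda_n)$, $\lambda_i\ge 0$. Then ${\bf A}={\bf V}(\alpha_t^2{\bf \Lambda}+\sigma_t^2{\bf I}){\bf V}^\top$, and ${\bf \Sigma}$ and ${\bf A}^{-1}$ commute. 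This gives
\begin{equation*}
\operatorname{Cov}[y_0|y_t]={\bf V}\operatorname{diag}\!\left(\frac{\sigma_t^2\lambda_i}{\alpha_t^2\lambda_i+\sigma_t^2}\right){\bf V}^\top .
\end{equation*}
In particular the posterior covariance is symmetric and shares the eigenvectors of ${\bf \Sigma}$. All four Loewner inequalities therefore reduce to scalar inequalities on the eigenvalues $\mu_i=\sigma_t^2\lambda_i/(\alpha_t^2\lambda_i+\sigma_t^2)$. This works because, for matrices diagonal in the same orthonormal basis, ${\bf X}\preceq{\bf Y}$ holds iff the corresponding eigenvalues are ordered, since ${\bf Y}-{\bf X}={\bf V}\operatorname{diag}(y_i-x_i){\bf V}^\top$.

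We then verify the three scalar bounds, assuming $\sigma_t>0$ and $\alpha_t>0$.
\begin{itemize}
\item \emph{Lower bound.} $\mu_i\ge 0$ is immediate from $\lambda_i\ge0$ and a positive denominator.
\item \emph{Bound by ${\bf \Sigma}$.} $\mu_i\le\lambda_i$ because $\sigma_t^2/(\alpha_t^2\lambda_i+\sigma_t^2)\le 1$.
\item \emph{Bound by $\frac{\sigma_t^2}{\alpha_t^2}{\bf I}$.} $\mu_i\le \sigma_t^2/\alpha_t^2$ because $\lambda_i/(\alpha_t^2\lambda_i+\sigma_t^2)\le \lambda_i/(\alpha_t^2\lambda_i)=1/\alpha_t^2$ when $\lambda_i>0$; the case $\lambda_i=0$ gives $\mu_i=0$ trivially.
\end{itemize}
Equivalently, ${\bf \Sigma}-\operatorname{Cov}[y_0|y_t]=\alpha_t^2{\bf \Sigma}{\bf A}^{-1}{\bf \Sigma}\succeq0$ gives the first upper bound directly from the expression in Lemma~\ref{lemma:1}. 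This is an alternative route that avoids the eigen-decomposition for that inequality.

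The only subtle point is justifying that $\sigma_t^2{\bf \Sigma}{\bf A}^{-1}$ is symmetric, so that the Loewner order makes sense. We handle this through the commutation of ${\bf \Sigma}$ with ${\bf A}^{-1}$, both being functions of ${\bf \Sigma}$. A second subtlety is that ${\bf \Sigma}$ may be singular, which is why we avoid writing ${\bf \Sigma}^{-1}$ (for example in the information form $(\alpha_t^2\sigma_t^{-2}{\bf I}+{\bf \Sigma}^{-1})^{-1}$) and work eigenvalue-wise instead. Apart from these two points, the argument is routine.
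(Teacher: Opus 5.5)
Your proposal is correct and follows essentially the same route as the paper. Both diagonalize $\mathbf{\Sigma}$, write $\operatorname{Cov}[y_0|y_t]$ in the same eigenbasis with eigenvalues $\sigma_t^2\nu_i/(\alpha_t^2\nu_i+\sigma_t^2)$, and reduce the Loewner inequalities to the scalar bounds $0\le f(\lambda)\le\lambda$ and $f(\lambda)\le\sigma_t^2/\alpha_t^2$. Your extra remarks are valid and simply make explicit what the paper leaves implicit: the identity $\mathbf{\Sigma}-\operatorname{Cov}[y_0|y_t]=\alpha_t^2\mathbf{\Sigma}\mathbf{A}^{-1}\mathbf{\Sigma}\succeq 0$, the symmetry check, and the treatment of a possibly singular $\mathbf{\Sigma}$.
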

\begin{proof}
    Let the spectral decomposition of ${\bf \Sigma}$ be
    \begin{equation}
        {\bf \Sigma} = {\bf Q}\operatorname{diag}(\nu_1, \dots, \nu_n){\bf Q}^{\top}
    \end{equation}
    where ${\bf Q}$ is an orthogonal matrix, and $\nu_i$ denotes the nonnegative eigenvalues. Setting ${\bf A} = \alpha_t^2 {\bf \Sigma}+ \sigma_t^2 {\bf I}$, we obtain:
    \begin{align}
        {\bf A} &= \alpha_t^2 {\bf \Sigma}+ \sigma_t^2 {\bf I} \\
        &= {\bf Q}\operatorname{diag}(\alpha_t^2\nu_1, \dots, \alpha_t^2\nu_n){\bf Q}^{\top} + {\bf Q}\operatorname{diag}(\sigma_t^2, \dots, \sigma_t^2){\bf Q}^{\top} \\
        &= {\bf Q}\operatorname{diag}(\alpha_t^2\nu_1+\sigma_t^2, \dots, \alpha_t^2\nu_n+\sigma_t^2){\bf Q}^{\top}
    \end{align}
    Thus,
    \begin{equation}
        {\bf A}^{-1} = {\bf Q}\operatorname{diag}(\frac{1}{\alpha_t^2\nu_1+\sigma_t^2}, \dots, \frac{1}{\alpha_t^2\nu_1+\sigma_t^2}){\bf Q}^{\top}
    \end{equation}
    Furthermore, we have:
    \begin{align}
        \operatorname{Cov}[y_0|y_t] &= \sigma_t^2 {\bf \Sigma} {\bf A}^{-1} \\
        &= {\bf Q}\operatorname{diag}(\sigma_t^2\nu_1, \dots, \sigma_t^2\nu_n){\bf Q}^{\top}{\bf A}^{-1}\\
        &= {\bf Q}\operatorname{diag}(\frac{\sigma_t^2\nu_1}{\alpha_t^2\nu_1+\sigma_t^2}, \dots, \frac{\sigma_t^2\nu_n}{\alpha_t^2\nu_n+\sigma_t^2}){\bf Q}^{\top}
    \end{align}
    Consider the scalar function $f(\lambda) = \frac{\sigma_t^2\lambda}{\alpha_t^2\lambda+\sigma_t^2}$. Clearly, for $\lambda \ge 0$, we have:
    \begin{gather}
        0 \leq f(\lambda) \leq \lambda \\
        0 \leq f(\lambda) \leq \frac{\sigma_t^2}{\alpha_t^2}
    \end{gather}
    Thus, we obtain:
    \begin{gather}
        {\bf 0} \preceq  \operatorname{Cov}[y_0|y_t] \preceq {\bf Q}\operatorname{diag}(\nu_1, \dots, \nu_n){\bf Q}^{\top} = {\bf \Sigma} \\
        {\bf 0} \preceq  \operatorname{Cov}[y_0|y_t] \preceq {\bf Q}\operatorname{diag}(\frac{\sigma_t^2}{\alpha_t^2}, \dots, \frac{\sigma_t^2}{\alpha_t^2}){\bf Q}^{\top}  = \frac{\sigma_t^2}{\alpha_t^2}{\bf I}
    \end{gather}
\end{proof}
Finally, we derive the upper bound on the estimation error of the optimal estimator as follows.
\begin{theorem}[Frequency-Domain Error Bound of the Optimal Estimator]
    Under the assumptions in Sec.~\ref{sec:2s},, the optimal estimation errors in the low- and high-frequency components satisfy, respectively:
    \begin{gather}
        e_L = \frac{1}{n}\mathbb{E}\|\mathbf{P}_L\left(y_0-\mathbb{E}[y_0|y_t]\right)\|^2 \leq \frac{m}{n}\frac{\sigma_t^2}{\alpha_t^2} \\
        e_H = \frac{1}{n}\mathbb{E}\|\mathbf{P}_H\left(y_0-\mathbb{E}[y_0|y_t]\right)\|^2 \leq \frac{\eta}{n} \operatorname{Tr}(\mathbf{\Sigma})
    \end{gather}
     where $n$ is the number of frames in the predicted trajectory, $e_L$ and $e_H$ denote the mean squared errors (MSEs) of the low- and high-frequency components, respectively. Consequently, the total error satisfies:
     \begin{equation}
         e = \frac{1}{n}\mathbb{E}\|x_0-\mathbb{E}[x_0|x_t]\|^2 = \frac{1}{n}\mathbb{E}\|y_0-\mathbb{E}[y_0|y_t]\|^2 = e_L +e_H \leq \frac{m}{n}\frac{\sigma_t^2}{\alpha_t^2} + \frac{\eta}{n} \operatorname{Tr}(\mathbf{\Sigma})
     \end{equation}
     The relative error is bounded by:
     \begin{equation}
         \hat{e} = \frac{e}{\operatorname{Tr}({\bf \Sigma})} \leq \frac{m}{n\operatorname{Tr}({\bf \Sigma})}\frac{\sigma_t^2}{\alpha_t^2} + \frac{\eta}{n}
     \end{equation}
\end{theorem}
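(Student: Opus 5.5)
The plan is to reduce every error quantity to a trace of the posterior covariance from Lemma~\ref{lemma:1}, and then bound the low- and high-frequency blocks separately using the two Loewner bounds of Lemma~\ref{lemma:2}.

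\textbf{Step 1: errors as traces.} By the tower property,
\begin{equation*}
\mathbb{E}\|\mathbf{P}(y_0-\mathbb{E}[y_0|y_t])\|^2 = \mathbb{E}\,\operatorname{Tr}\big(\mathbf{P}\operatorname{Cov}[y_0|y_t]\mathbf{P}\big)
\end{equation*}
for any projection $\mathbf{P}$. By Lemma~\ref{lemma:1}, $\operatorname{Cov}[y_0|y_t]=\sigma_t^2\mathbf{\Sigma}(\alpha_t^2\mathbf{\Sigma}+\sigma_t^2\mathbf{I})^{-1}$ does not depend on $y_t$, so the outer expectation drops. Hence
\begin{equation*}
n e_L=\operatorname{Tr}(\mathbf{P}_L \mathbf{C}_t\mathbf{P}_L), \qquad n e_H=\operatorname{Tr}(\mathbf{P}_H \mathbf{C}_t\mathbf{P}_H),
\end{equation*}
with $\mathbf{C}_t:=\operatorname{Cov}[y_0|y_t]$.

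\textbf{Step 2: bound each block.} Conjugation by a fixed matrix preserves the Loewner order, and the trace is monotone under it.
\begin{itemize}
\item Low band: the second inequality of Lemma~\ref{lemma:2} gives $\mathbf{P}_L\mathbf{C}_t\mathbf{P}_L\preceq \frac{\sigma_t^2}{\alpha_t^2}\mathbf{P}_L$. Since $\operatorname{Tr}\mathbf{P}_L=m$, this yields $e_L\le \frac{m}{n}\frac{\sigma_t^2}{\alpha_t^2}$.
\item High band: the first inequality, $\mathbf{C}_t\preceq\mathbf{\Sigma}$, gives $\operatorname{Tr}(\mathbf{P}_H\mathbf{C}_t\mathbf{P}_H)\le\operatorname{Tr}(\mathbf{P}_H\mathbf{\Sigma}\mathbf{P}_H)=\operatorname{Tr}(\mathbf{P}_H\mathbf{\Sigma})\le\eta\operatorname{Tr}(\mathbf{\Sigma})$, where the last step is the low-frequency-dominance assumption.
\end{itemize}

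\textbf{Step 3: total and relative errors.} Orthogonality of $\mathbf{U}$ together with \eqref{eq:optimal_freq} gives $\|x_0-\mathbb{E}[x_0|x_t]\|=\|y_0-\mathbb{E}[y_0|y_t]\|$. Since $\mathbf{P}_L+\mathbf{P}_H=\mathbf{I}$ are complementary orthogonal projections, Pythagoras gives $\|v\|^2=\|\mathbf{P}_Lv\|^2+\|\mathbf{P}_Hv\|^2$, so $e=e_L+e_H$ exactly. Adding the two bounds gives the stated total bound, and dividing by $\operatorname{Tr}(\mathbf{\Sigma})$ gives the relative bound.

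\textbf{Main obstacle.} There is no real obstacle; the argument is bookkeeping. The one point needing care is justifying that Loewner inequalities pass through compression by $\mathbf{P}_L$ or $\mathbf{P}_H$ and then through the trace. This holds because $\mathbf{A}\preceq\mathbf{B}$ implies $\mathbf{P}\mathbf{A}\mathbf{P}\preceq\mathbf{P}\mathbf{B}\mathbf{P}$, and the trace of a PSD matrix is nonnegative.

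A secondary caveat concerns the normalization in the relative bound. Dividing $e$ by $\operatorname{Tr}(\mathbf{\Sigma})$ gives $\frac{\eta}{n}$ for the high-frequency term, which is exactly what the statement claims, so the bound holds as written. The only subtlety is that $\hat e$ is a per-frame quantity divided by total energy, which I will take as the definition.
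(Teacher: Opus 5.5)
Your proposal is correct and follows the paper's proof essentially step for step. Both arguments write $e_L$ and $e_H$ as traces of the data-independent Gaussian posterior covariance restricted to each band. Both bound the low band via $\operatorname{Cov}[y_0|y_t]\preceq\frac{\sigma_t^2}{\alpha_t^2}\mathbf{I}$, bound the high band via $\operatorname{Cov}[y_0|y_t]\preceq\mathbf{\Sigma}$ together with the energy assumption, and then sum using $\mathbf{P}_L+\mathbf{P}_H=\mathbf{I}$ and divide by $\operatorname{Tr}(\mathbf{\Sigma})$. The only difference is that you spell out why the Loewner order survives compression by $\mathbf{P}$ and taking the trace, a step the paper leaves implicit; its $\operatorname{Tr}(\mathbf{P}_L\operatorname{Cov}[y_0|y_t])$ equals your $\operatorname{Tr}(\mathbf{P}_L\mathbf{C}_t\mathbf{P}_L)$ by idempotence and cyclicity of the trace.
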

\begin{proof}
    We first consider the low-frequency error $e_L$:
    \begin{align}
        e_L &= \frac{1}{n}\mathbb{E}\|\mathbf{P}_L\left(y_0-\mathbb{E}[y_0|y_t]\right)\|^2 \\ 
        &= \frac{1}{n}\mathbb{E}\left[\left(y_0-\mathbb{E}[y_0|y_t]\right)^{\top}\mathbf{P}_L\left(y_0-\mathbb{E}[y_0|y_t]\right)\right] \\
        &= \frac{1}{n}\operatorname{Tr}\left(\mathbf{P}_L\operatorname{Cov}[y_0|y_t]\right) \\
        & \leq \frac{1}{n}\frac{\sigma_t^2}{\alpha_t^2}\operatorname{Tr}\left(\mathbf{P}_L{\bf I}\right) \\
        &= \frac{m}{n} \frac{\sigma_t^2}{\alpha_t^2}
    \end{align}
    We then turn to the high-frequency error:
    \begin{align}
        e_H &= \frac{1}{n}\mathbb{E}\|\mathbf{P}_H\left(y_0-\mathbb{E}[y_0|y_t]\right)\|^2 \\ 
        &= \frac{1}{n}\mathbb{E}\left[\left(y_0-\mathbb{E}[y_0|y_t]\right)^{\top}\mathbf{P}_H\left(y_0-\mathbb{E}[y_0|y_t]\right)\right] \\
        &= \frac{1}{n}\operatorname{Tr}\left(\mathbf{P}_H\operatorname{Cov}[y_0|y_t]\right) \\
        & \leq \frac{1}{n}\operatorname{Tr}\left(\mathbf{P}_H{\bf \Sigma}\right) \\
        & \leq \frac{\eta}{n} \operatorname{Tr}\left({\bf \Sigma}\right)
    \end{align}
    Furthermore, the total error satisfies:
    \begin{align}
        e &= \frac{1}{n}\mathbb{E}\|y_0-\mathbb{E}[y_0|y_t]\|^2 \\
        &= \frac{1}{n}\mathbb{E}\left[\left(y_0-\mathbb{E}[y_0|y_t]\right)^{\top}\mathbf{I}\left(y_0-\mathbb{E}[y_0|y_t]\right)\right] \\
        &= \frac{1}{n}\mathbb{E}\left[\left(y_0-\mathbb{E}[y_0|y_t]\right)^{\top}\mathbf{P}_L\left(y_0-\mathbb{E}[y_0|y_t]\right)\right] + \frac{1}{n}\mathbb{E}\left[\left(y_0-\mathbb{E}[y_0|y_t]\right)^{\top}\mathbf{P}_H\left(y_0-\mathbb{E}[y_0|y_t]\right)\right] \\
        &= \frac{1}{n}\mathbb{E}\|\mathbf{P}_L\left(y_0-\mathbb{E}[y_0|y_t]\right)\|^2 + \frac{1}{n}\mathbb{E}\|\mathbf{P}_H\left(y_0-\mathbb{E}[y_0|y_t]\right)\|^2 \\
        &= e_L + e_H \\
        & \leq \frac{m}{n} \frac{\sigma_t^2}{\alpha_t^2} + \frac{\eta}{n} \operatorname{Tr}\left({\bf \Sigma}\right)
    \end{align}
\end{proof}
\begin{remark}
    The above discussion can be extended to the more general case where the data are concentrated in any frequency band of the DCT spectrum.
\end{remark}

\section{Details of the Synthetic Data Experiments} \label{app:toy}

\begin{figure}[h]
  \centering
  \includegraphics[width=\linewidth]{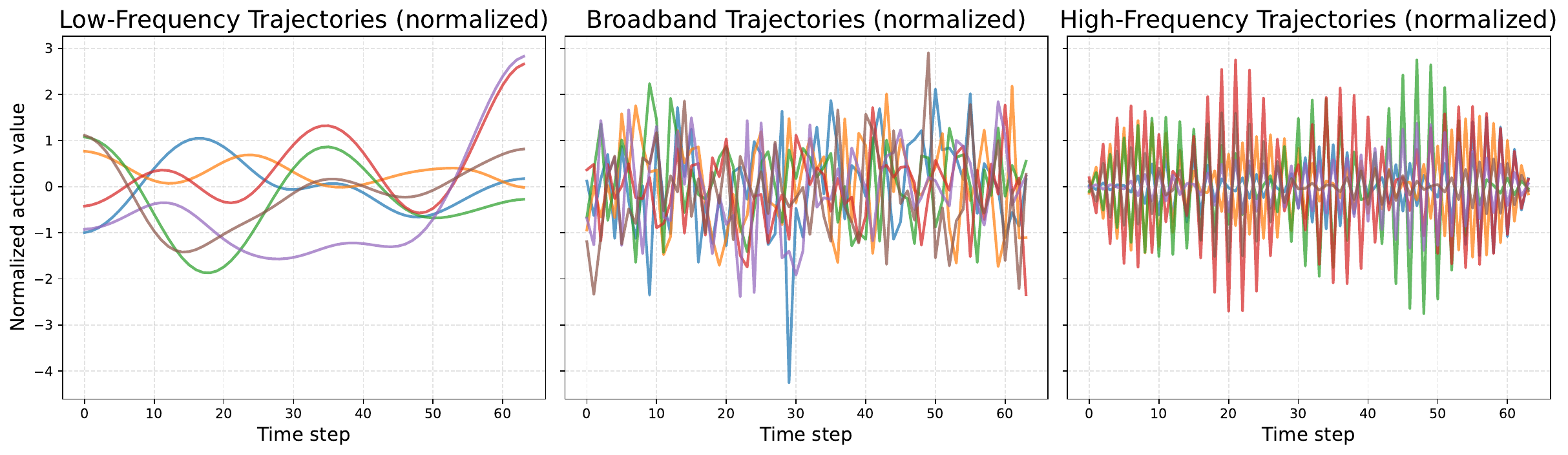}
  \caption{Examples of normalized synthetic trajectories from the low-frequency-dominant, broadband, and high-frequency-dominant data generators. For clarity, we plot the first action dimension only. Even in the standardized training space, \texttt{lowfreq} trajectories remain substantially smoother in the time domain, \texttt{broadband} trajectories exhibit richer multi-band variations, and \texttt{highfreq} trajectories oscillate rapidly across time.}
  \label{fig:toy_dataset_examples}
\end{figure}

\subsection{Data Synthesis}

We construct three synthetic trajectory datasets with controlled spectral concentration. For each sample, we first draw DCT-domain coefficients
\[
z_{i,d,k} \sim \mathcal{N}(0,\lambda_k),
\]
where $i$ indexes the trajectory, $d$ indexes the action dimension, and $k$ indexes the DCT mode. We then transform the coefficients back to the time domain through the inverse DCT,
\[
x_{i,d} = U^\top z_{i,d},
\]
where $U$ denotes the orthogonal DCT matrix. In this way, the training and sampling procedures are performed entirely in the time domain, while the DCT domain is used only to construct data with controlled spectra and to analyze the resulting errors.

We consider three data generators:
\begin{itemize}
\item \texttt{lowfreq}: the first six DCT modes are fixed as $(0.25, 1.00, 0.90, 0.65, 0.35, 0.20)$, while the remaining modes receive an exponentially decaying tail whose total energy is fixed to \(1\%\) of the full spectrum;
\item \texttt{broadband}: \(\lambda_k = 1\);
\item \texttt{highfreq}: the last six DCT modes are fixed as $(0.25, 1.00, 0.90, 0.65, 0.35, 0.20)$, while the preceding modes receive an exponentially decaying prefix whose total energy is fixed to \(1\%\) of the full spectrum.
\end{itemize}
Under this construction, the first 6 modes account for 99\% of the total energy in \texttt{lowfreq}, the last 6 modes account for 99\% of the total energy in \texttt{highfreq}, and energy is uniformly distributed in \texttt{broadband}. For \texttt{lowfreq} and \texttt{broadband}, we define the 95\% spectral cutoff by cumulative energy from low to high, which yields the first 6 modes for \texttt{lowfreq} and the first 61 modes for \texttt{broadband}. For \texttt{highfreq}, we instead define the dominant high-frequency band as the smallest suffix of DCT modes whose cumulative energy from high to low reaches 95\%, which yields the last 6 modes. This construction provides a controlled contrast among strongly low-frequency-dominant, spectrally broad, and strongly high-frequency-dominant trajectories while keeping all other factors fixed. After synthesis, we normalize each dataset in the time domain on a per-action-dimension basis using the training-set mean and standard deviation computed over all samples and timesteps. All model training, sampling, and quantitative errors are computed in this standardized space. Fig.~\ref{fig:toy_dataset_examples} presents examples of the synthetic data and illustrates that the \texttt{lowfreq} trajectories are significantly smoother, whereas the \texttt{highfreq} trajectories exhibit rapid oscillations.

\subsection{Training Details}

Each dataset contains 10,000 training trajectories and 256 test trajectories. All trajectories have length \(n=64\) and action dimension \(d=4\). We train a diffusion model directly in the time domain with the standard DDPM\citep{ho2020denoising} \(\epsilon\)-prediction objective, using the mean squared error loss.

The denoiser is a small 1D convolutional residual network. It uses a \(1\times1\) input projection, 8 residual blocks with dilations \(1,2,4,8\) repeated cyclically, 64 hidden channels, sinusoidal timestep embeddings, and a timestep embedding dimension of 128. The model is optimized with Adam using a learning rate of \(10^{-3}\), a batch size of 256, and 60 training epochs.

For the forward diffusion process, we use 100 diffusion timesteps with a linear variance schedule from \(\beta_{\min}=10^{-4}\) to \(\beta_{\max}=2\times 10^{-2}\). At inference time, we use deterministic DDIM\citep{song2020denoising} sampling with \(\eta=0\). To isolate the effect of the number of reverse updates, all step ablations share the same initial terminal noise. Following our main text, we treat the 100-step DDIM sample as the pseudo-truth trajectory and compare shorter DDIM chains against this reference.

We evaluate reverse-step counts $K \in \{1,\dots,100\}$. We further decompose the action-space error into its low-band and high-band components according to the 95\% spectral cutoff of each dataset, in order to quantify how the low- and high-frequency errors evolve as the number of decoding steps changes. For \texttt{lowfreq} and \texttt{broadband}, the high-band is defined by the complement of the standard low-to-high 95\% cutoff. For \texttt{highfreq}, the high-band is defined directly as the smallest suffix of DCT modes whose cumulative energy from high to low reaches 95\%. In addition, we use the built-in controller in RoboTwin2.0\citep{chen2025robotwin} to execute the decoded trajectories and record the resulting tracking MSE, in order to examine whether the low-level controller can further suppress the impact of decoding errors.

\begin{figure}[htbp]
    \centering
    \includegraphics[width=1.0\linewidth]{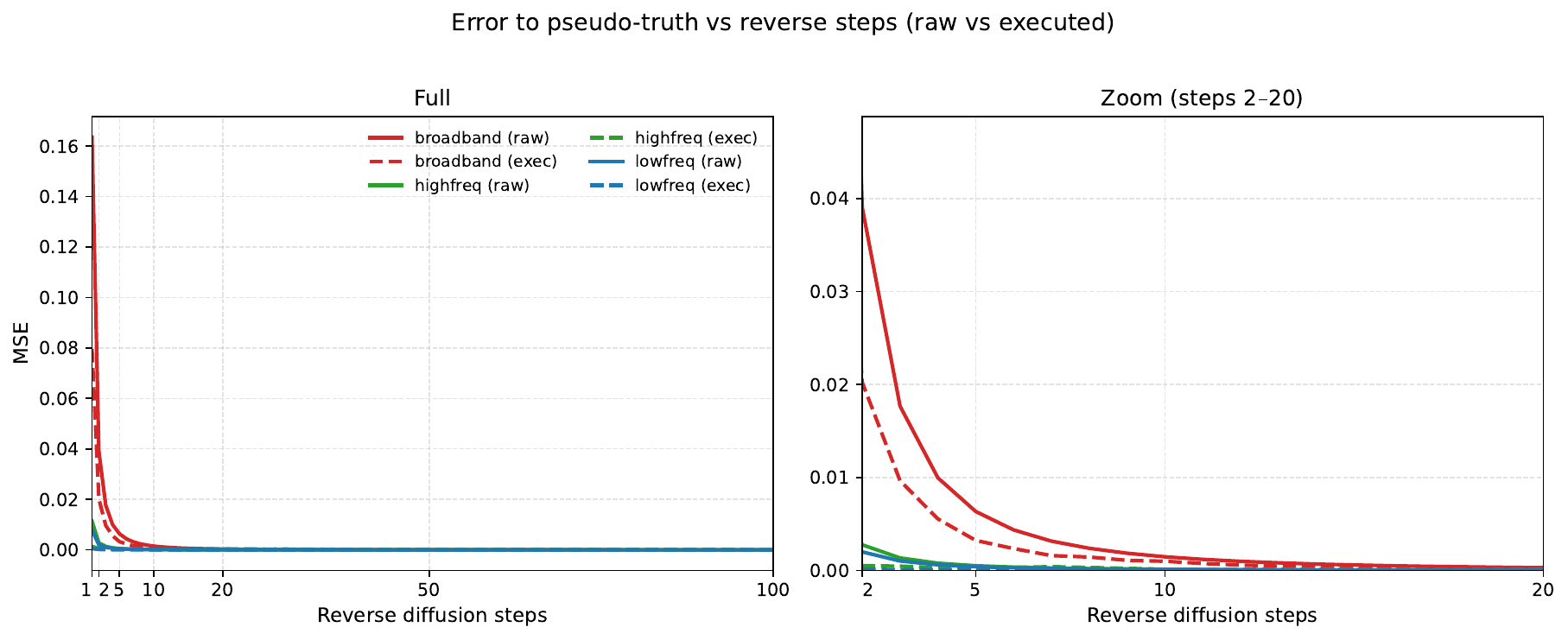}
    \caption{Decoding Error at Different Sampling Steps}
    \label{fig:exc}
    \vspace{-3.0mm}
\end{figure}

\begin{table}[htbp]
\centering
\footnotesize
\setlength{\tabcolsep}{3pt}
\resizebox{\linewidth}{!}{%
\begin{tabular}{lccccccc}
\hline
Dataset & Steps & Action MSE & Low-band MSE & High-band MSE & Exec MSE & Exec Low-band MSE & Exec High-band MSE \\
\hline
\texttt{lowfreq} & 1   & 0.007805 & \(1.333\times10^{-3}\) & 0.006472 & \(4.860\times10^{-4}\) & \(4.141\times10^{-4}\) & \(7.188\times10^{-5}\) \\
\texttt{lowfreq} & 2   & 0.001997 & \(3.598\times10^{-4}\) & 0.001637 & \(1.475\times10^{-4}\) & \(1.203\times10^{-4}\) & \(2.714\times10^{-5}\) \\
\texttt{lowfreq} & 5   & \(4.008\times10^{-4}\) & \(6.205\times10^{-5}\) & \(3.388\times10^{-4}\) & \(4.208\times10^{-5}\) & \(3.557\times10^{-5}\) & \(6.512\times10^{-6}\) \\
\texttt{lowfreq} & 10  & \(9.587\times10^{-5}\) & \(1.445\times10^{-5}\) & \(8.142\times10^{-5}\) & \(1.498\times10^{-5}\) & \(1.302\times10^{-5}\) & \(1.957\times10^{-6}\) \\
\texttt{lowfreq} & 50  & \(1.175\times10^{-6}\) & \(1.866\times10^{-7}\) & \(9.880\times10^{-7}\) & \(2.161\times10^{-7}\) & \(1.848\times10^{-7}\) & \(3.124\times10^{-8}\) \\
\texttt{lowfreq} & 100 & 0 & 0 & 0 & 0 & 0 & 0 \\
\hline
\texttt{broadband} & 1   & 0.1635 & 0.1562 & \(7.382\times10^{-3}\) & 0.07955 & 0.07955 & \(6.317\times10^{-8}\) \\
\texttt{broadband} & 2   & 0.03905 & 0.03729 & \(1.760\times10^{-3}\) & 0.02022 & 0.02022 & \(7.184\times10^{-8}\) \\
\texttt{broadband} & 5   & \(6.339\times10^{-3}\) & \(6.054\times10^{-3}\) & \(2.852\times10^{-4}\) & \(3.228\times10^{-3}\) & \(3.228\times10^{-3}\) & \(5.083\times10^{-8}\) \\
\texttt{broadband} & 10  & \(1.467\times10^{-3}\) & \(1.401\times10^{-3}\) & \(6.592\times10^{-5}\) & \(9.902\times10^{-4}\) & \(9.901\times10^{-4}\) & \(3.150\times10^{-8}\) \\
\texttt{broadband} & 50  & \(1.855\times10^{-5}\) & \(1.771\times10^{-5}\) & \(8.330\times10^{-7}\) & \(9.496\times10^{-5}\) & \(9.495\times10^{-5}\) & \(1.308\times10^{-8}\) \\
\texttt{broadband} & 100 & 0 & 0 & 0 & 0 & 0 & 0 \\
\hline
\texttt{highfreq} & 1   & 0.011386 & 0.010280 & \(1.106\times10^{-3}\) & 0.001351 & 0.001351 & \(1.676\times10^{-8}\) \\
\texttt{highfreq} & 2   & 0.002760 & 0.002461 & \(2.987\times10^{-4}\) & \(5.171\times10^{-4}\) & \(5.170\times10^{-4}\) & \(1.811\times10^{-8}\) \\
\texttt{highfreq} & 5   & \(5.073\times10^{-4}\) & \(4.527\times10^{-4}\) & \(5.460\times10^{-5}\) & \(2.490\times10^{-4}\) & \(2.490\times10^{-4}\) & \(1.958\times10^{-8}\) \\
\texttt{highfreq} & 10  & \(1.152\times10^{-4}\) & \(1.022\times10^{-4}\) & \(1.294\times10^{-5}\) & \(1.253\times10^{-4}\) & \(1.253\times10^{-4}\) & \(1.627\times10^{-8}\) \\
\texttt{highfreq} & 50  & \(1.349\times10^{-6}\) & \(1.181\times10^{-6}\) & \(1.682\times10^{-7}\) & \(7.722\times10^{-6}\) & \(7.718\times10^{-6}\) & \(3.717\times10^{-9}\) \\
\texttt{highfreq} & 100 & 0 & 0 & 0 & 0 & 0 & 0 \\
\hline
\end{tabular}%
}
\caption{Few-step DDIM decoding errors measured against the 100-step pseudo-truth trajectory. We report both raw action-space errors and errors after RoboTwin's strict TOPP-based execution map. For \texttt{highfreq}, the high-band is defined as the smallest suffix of DCT modes whose cumulative energy from high to low reaches 95\%.}
\label{tab:toy_results}
\end{table}

\subsection{Experimental Results}

Table~\ref{tab:toy_results} and Figure~\ref{fig:exc} summarize the quantitative results for selected DDIM step counts. Both \texttt{lowfreq} and \texttt{highfreq} are spectrally concentrated in the DCT domain, and both therefore achieve substantially smaller reconstruction errors than \texttt{broadband}, whose energy is spread across all modes. At two reverse steps, the action-space MSE is \(1.997\times10^{-3}\) for \texttt{lowfreq} and \(2.760\times10^{-3}\) for \texttt{highfreq}, both far below the \texttt{broadband} value of \(3.905\times10^{-2}\). At the same time, \texttt{lowfreq} remains consistently easier than \texttt{highfreq}. This is consistent with the spectral bias of neural networks\citep{rahaman2019spectral}, namely that low-frequency components are fitted more easily than high-frequency ones. As a result, although both datasets are spectrally concentrated, concentration in the low-frequency subspace is more favorable for few-step denoising than concentration in rapidly oscillating modes.

A further difference appears in the structure of the residual error. On \texttt{lowfreq}, the error drops sharply within the first few reverse updates: the action-space MSE decreases from
\(7.805\times10^{-3}\) at one step to \(1.997\times10^{-3}\) at two steps, and further to \(4.008\times10^{-4}\) at five steps. Moreover, the residual few-step error is already concentrated in the high-frequency band: at two steps, the low-band MSE is only \(3.598\times10^{-4}\), whereas the high-band MSE remains \(1.637\times10^{-3}\). This gap becomes even more pronounced after applying RoboTwin's TOPP-based execution map. For \texttt{lowfreq}, the executed-space MSE at two steps is reduced from \(1.997\times10^{-3}\) in action space to \(1.475\times10^{-4}\), and the executed high-band MSE drops from \(1.637\times10^{-3}\) to only \(2.714\times10^{-5}\).

By contrast, \texttt{broadband} still requires substantially more reverse updates: even after the same execution map is applied, its two-step executed MSE remains \(2.022\times10^{-2}\). The \texttt{highfreq} control lies between the two extremes. Under the high-frequency-end 95\% cutoff, its two-step error decomposes into a low-band MSE of \(2.461\times10^{-3}\) and a high-band MSE of only \(2.987\times10^{-4}\), while the executed low-band and high-band MSEs are \(5.170\times10^{-4}\) and \(1.811\times10^{-8}\), respectively. Thus, even after isolating the dominant high-frequency suffix, the residual few-step error in \texttt{highfreq} is not concentrated in that band as strongly as the \texttt{lowfreq} residual is concentrated in its high-frequency tail. Overall, these observations show that spectral concentration alone is
not sufficient: few-step saturation depends specifically on concentration in low-frequency, task-relevant modes, while residual high-frequency errors are especially easy to suppress by the downstream low-pass execution pipeline.

\section{Implementation Details} \label{app:impl}
\subsection{Details of Simulation Experiment Settings}
In this section, we provide the essential hyperparameters required to reproduce our results. Detailed hyperparameter choices are listed in Tab. \ref{tab:hyperparameters}.
\begin{table}[h]
\centering
\caption{Hyperparameter Settings}
\label{tab:hyperparameters}
\setlength{\tabcolsep}{8pt}
\renewcommand{\arraystretch}{1.15}
\begin{tabular}{@{}llr@{}}
\toprule
\textbf{Category} & \textbf{Hyperparameter} & \textbf{Value} \\
\midrule
\multicolumn{3}{@{}l}{\textbf{Training}} \\
\midrule
& Batch size (Robotwin2.0) & 256 \\
& Batch size (Adroit \& MetaWorld) & 128 \\
& Num. epochs & 3000 \\
& Optimizer & AdamW \\
& Weight decay(Robotwin2.0) & $1\times10^{-6}$ \\
& Weight decay(Adroit \& MetaWorld)& $1\times10^{-8}$ \\
& LR scheduler & Cosine \\
& LR warmup steps & 500 \\
& Learning rate & $1\times10^{-4}$ \\
& Horizon (Robotwin2.0) & 8 \\
& Horizon (Adroit \& MetaWorld) & 16 \\
& Num. action steps (Robotwin2.0) & 6 \\
& Num. action steps (Adroit \& MetaWorld) & 8 \\
& Observation steps (Robotwin2.0) & 3 \\
& Observation steps (Adroit \& MetaWorld) & 2 \\
& Encoder output dim. & 64 \\
& Diffusion timestep dim. & 64 \\
& MLP expansion ratio. & 4 \\
\midrule
\multicolumn{3}{@{}l}{\textbf{Inference}} \\
\midrule
& Num. inference steps & 2 \\
& Num. train steps & 100 \\
\bottomrule
\end{tabular}
\end{table}

\subsection{Details of Real-World Experiment Settings}
The real-world experimental setup is shown in Fig.~\ref{fig:task}, and the task workflow is illustrated in Fig.~\ref{fig:vis}. We design three tasks: Place Object, Adjust Bottle, and Stack Blocks Two. Specifically, the three tasks proceed as follows:
\begin{itemize}
    \item \texttt{Place Object}: The robot picks up the target object and places it onto the designated platform.
    \item \texttt{Adjust Bottle}: The robot grasps the fallen cup and then sets it upright.
    \item \texttt{Stack Blocks Two}: The robot grasps one block and stacks it on top of another block.
\end{itemize}

\begin{figure}[htbp]
    \centering
    \includegraphics[width=0.8\linewidth]{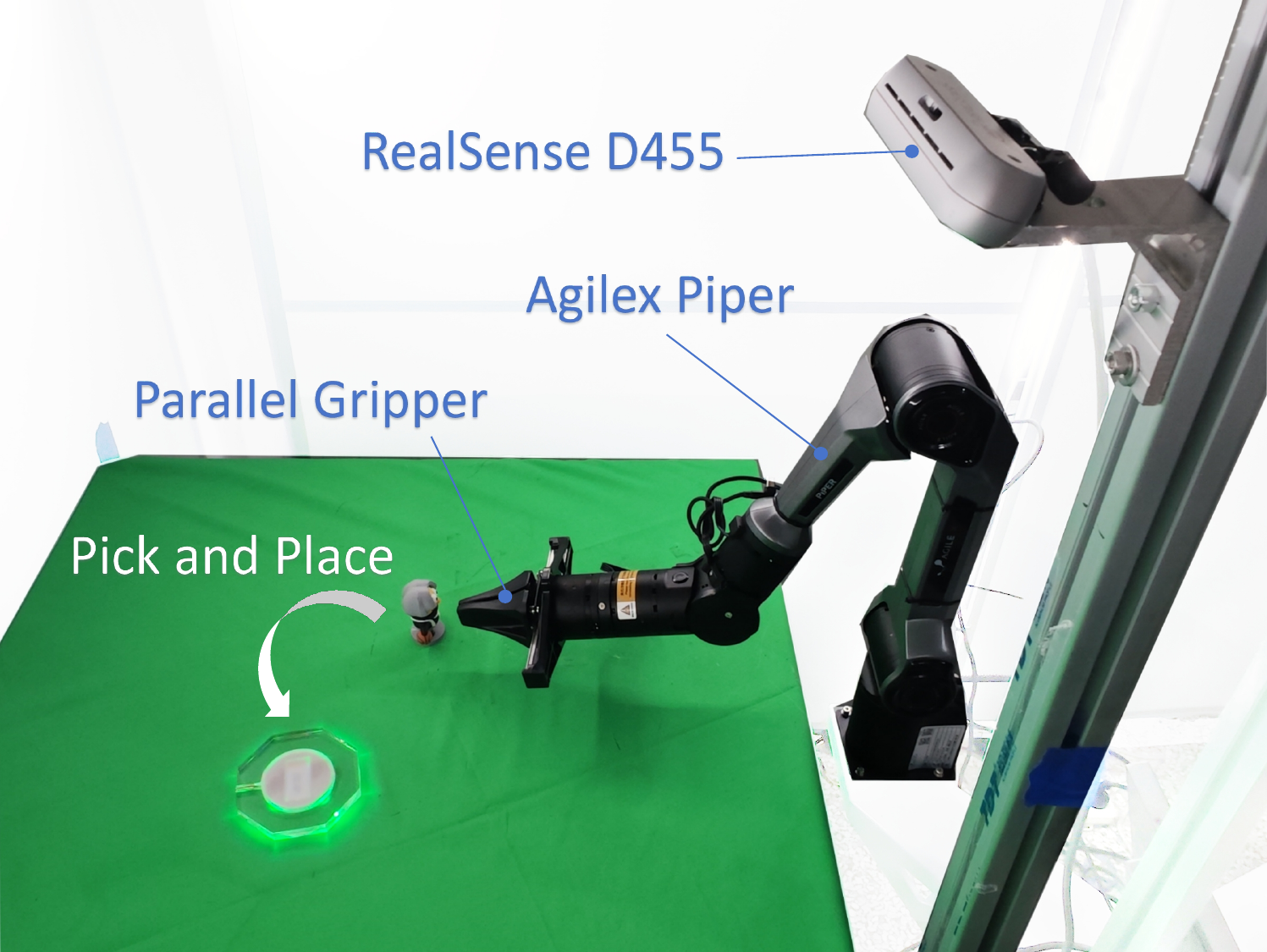}
    \caption{\textbf{Real-World Experiment Setup.} }
    \label{fig:task}
    \vspace{-3.0mm}
\end{figure}

\begin{figure}[ht]
  \centering
  \includegraphics[width=\linewidth]{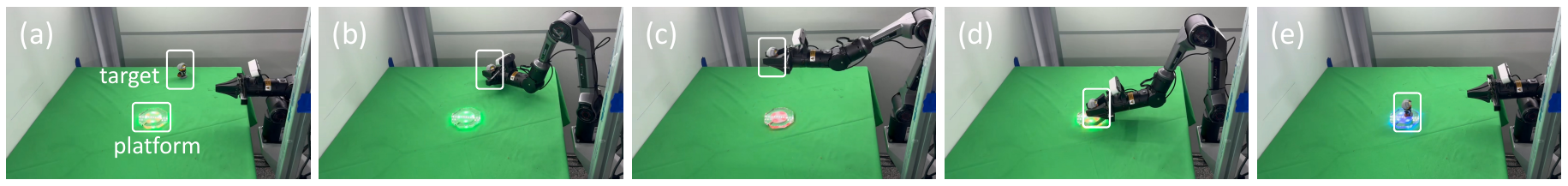} \\
  \vspace{1pt} 
  
  \includegraphics[width=\linewidth]{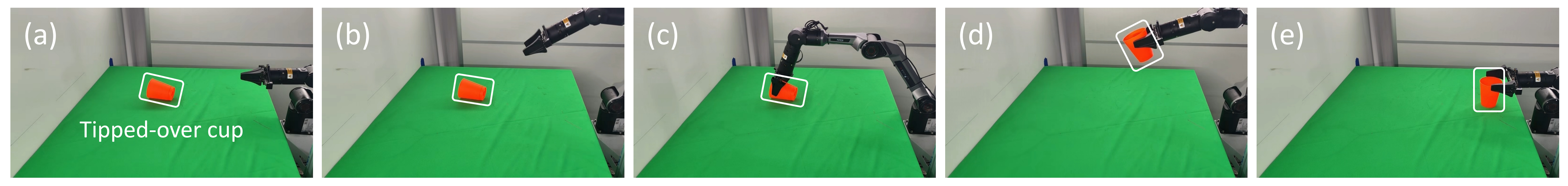} \\
  \vspace{1pt}
  
  \includegraphics[width=\linewidth]{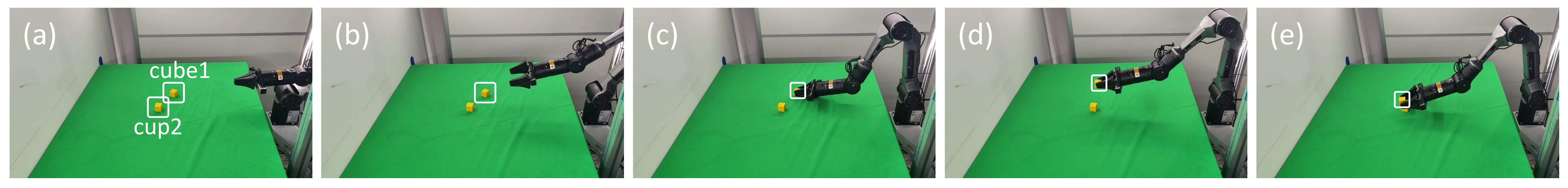}

  \caption{\textbf{Real-world Experiments.} The image sequence (top to bottom) illustrates the robot successfully performing three tasks: placing an object, uprighting a fallen cup, and stacking two blocks.}
  \label{fig:vis}
  \vspace{-2.0mm}
\end{figure}

\section{Frequency-domain Decomposition Details}\label{app:freq}
\begin{figure}[!t]
    \centering
    \includegraphics[width=0.8\linewidth]{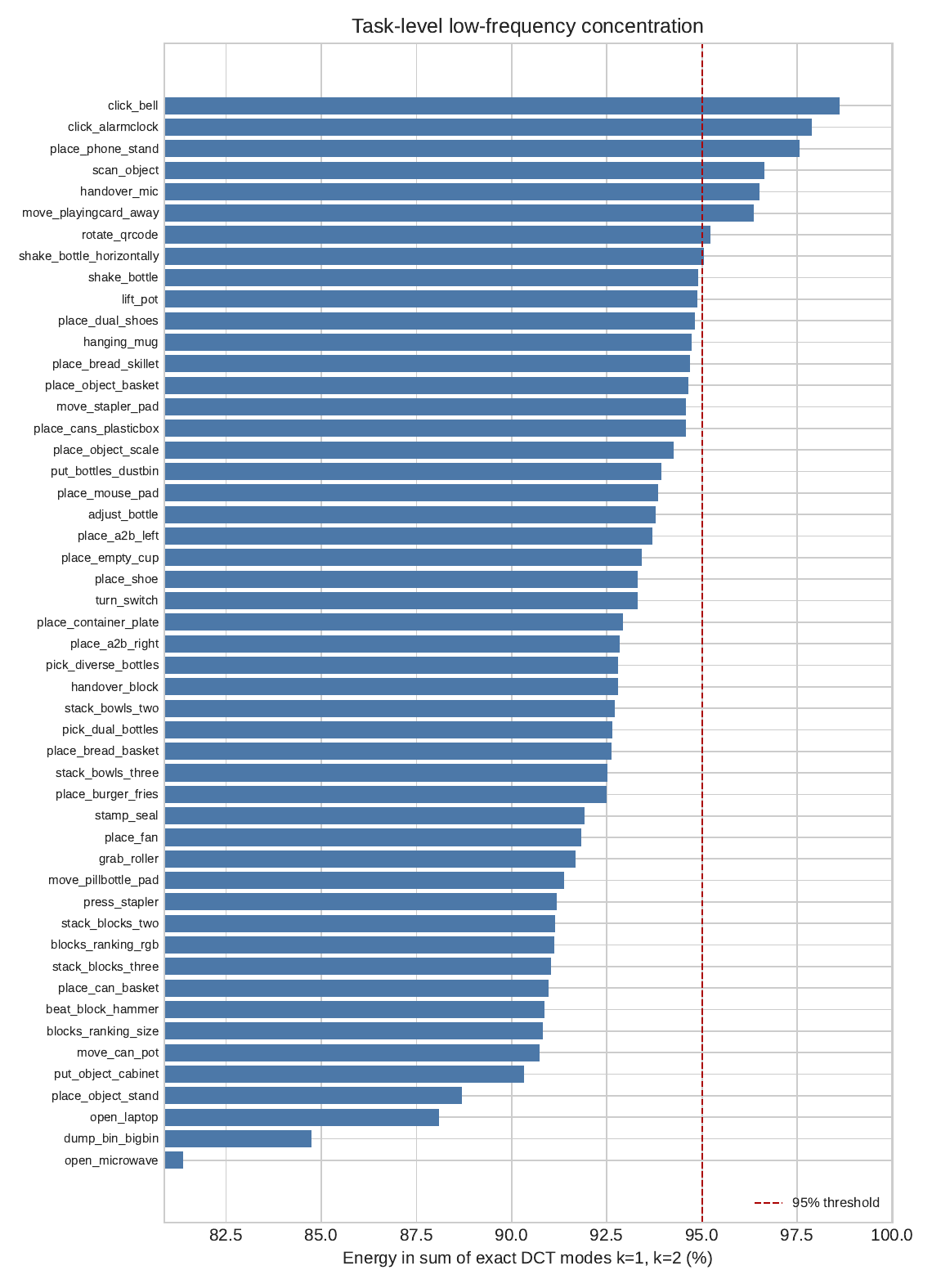}
    \caption{Fraction of Energy Contained in the First 5\% of DCT Modes for Each Task}
    \label{fig:dct_et}
    \vspace{-3.0mm}
\end{figure}

For each trajectory segment, we analyze the $14$-dimensional action sequence $X\in\mathbb{R}^{T\times 14}$, where $X_{t,d}$ denotes the $d$-th action component at time step $t$. Each episode is partitioned into non-overlapping windows of length $T$ (e.g., $T=32$ or $T=8$), and any remainder shorter than $T$ is discarded. For each segment, we first remove the per-dimension temporal mean,
\[
\tilde{X}_{t,d}=X_{t,d}-\frac{1}{T}\sum_{\tau=0}^{T-1}X_{\tau,d},
\]
We then apply a one-dimensional DCT-II independently along the time axis of each action dimension, using orthonormal normalization:
\[
C_{k,d}=\alpha_k\sum_{t=0}^{T-1}\tilde{X}_{t,d}\cos\!\left[\frac{\pi}{T}\left(t+\frac{1}{2}\right)k\right],\qquad k=0,\dots,T-1,
\]
with $\alpha_0=T^{-1/2}$ and $\alpha_k=(2/T)^{1/2}$ for $k\geq 1$. Because this transform is orthonormal, Parseval's identity holds, and the squared coefficients can be interpreted directly as energy. We therefore define the energy of mode $k$ for a segment as
\[
E_k=\sum_{d=1}^{14} C_{k,d}^2,
\]
and its normalized energy share as
\[
p_k=\frac{E_k}{\sum_{j=0}^{T-1}E_j}.
\]
Dataset-level spectra are obtained by summing $E_k$ over all segments and then normalizing by the total summed energy. We report both exact mode-wise energy shares $\{p_k\}_{k=0}^{T-1}$ in Sec.~\ref{sec:motivation} and fraction of energy contained in the first 5\%($k=1,2$) of DCT modes for each task in Fig.~\ref{fig:dct_et}. As can be seen, the first two DCT modes account for more than 80\% of the total energy for every task. Among the 50 tasks, only 4 have a cumulative energy ratio below 90\% for these two modes. On average, the (k=1) mode alone accounts for 93.2\% of the total energy, indicating that robot trajectories are highly concentrated in low-frequency components.

\section{Detailed Experimental Results on RoboTwin2.0}
As shown in Table~\ref{tab:robotwin2_pocketdp3_large_full}, we report the comparison of model success rates across 50 tasks on RoboTwin2.0.
\begin{table*}[h]
\small
\setlength{\tabcolsep}{3.6pt}
\renewcommand{\arraystretch}{1.12}

\resizebox{\textwidth}{!}{%
\begin{tabular}{@{}l*{10}{c}@{}}
\toprule
~ & Adjust Bottle & Beat Block Hammer & Blocks Ranking RGB & Blocks Ranking Size & Click Alarmclock & Click Bell & Dump Bin Bigbin & Grab Roller & Handover Block & Handover Mic \\
\midrule
DP & 97.0 & 42.0 & 0.0 & 1.0 & 61.0 & 54.0 & 49.0 & \underline{98.0} & 10.0 & 53.0 \\
DP3 & \underline{99.0} & 72.0 & \textbf{3.0} & 2.0 & 77.0 & 90.0 & \textbf{85.0} & \underline{98.0} & \underline{70.0} & \textbf{100.0} \\
Flow Policy & \textbf{100.0} & \underline{75.0} & 0.0 & 0.0 & \textbf{96.0} & \textbf{100.0} & 80.0 & \underline{98.0} & 16.0 & 67.0 \\
MP1 & 95.0 & 63.0 & \underline{2.0} & \underline{3.0} & \underline{91.0} & \underline{98.0} & 83.0 & \textbf{100.0} & \textbf{77.0} & \underline{93.0} \\
\rowcolor{oursblue}
HDP3 \textbf{(Ours)} & 92.0 & \textbf{88.0} & \textbf{3.0} & \textbf{8.0} & \textbf{96.0} & \textbf{100.0} & \underline{84.0} & \textbf{100.0} & \underline{70.0} & \textbf{100.0} \\
\bottomrule
\end{tabular}%
}
\vspace{3pt}

\resizebox{\textwidth}{!}{%
\begin{tabular}{@{}l*{10}{c}@{}}
\toprule
~ & Hanging Mug & Lift Pot & Move Can Pot & Move Pillbottle Pad & Move Playingcard Away & Move Stapler Pad & Open Laptop & Open Microwave & Pick Diverse Bottles & Pick Dual Bottles \\
\midrule
DP & 8.0 & 39.0 & 39.0 & 1.0 & 47.0 & 1.0 & 49.0 & 5.0 & 6.0 & 24.0 \\
DP3 & \underline{17.0} & \textbf{97.0} & 70.0 & \underline{41.0} & 68.0 & \textbf{12.0} & \underline{82.0} & \underline{61.0} & 52.0 & 60.0 \\
Flow Policy & 6.0 & 48.0 & 40.0 & 14.0 & 64.0 & 1.0 & 76.0 & 7.0 & 64.0 & 83.0 \\
MP1 & 11.0 & 81.0 & \underline{87.0} & 35.0 & \textbf{76.0} & 5.0 & 70.0 & 24.0 & \textbf{73.0} & \textbf{93.0} \\
\rowcolor{oursblue}
HDP3 \textbf{(Ours)} & \textbf{35.0} & \underline{95.0} & \textbf{95.0} & \textbf{56.0} & \underline{73.0} & \underline{10.0} & \textbf{85.0} & \textbf{92.0} & \underline{71.0} & \underline{90.0} \\
\bottomrule
\end{tabular}%
}
\vspace{3pt}

\resizebox{\textwidth}{!}{%
\begin{tabular}{@{}l*{10}{c}@{}}
\toprule
~ & Place A2B Left & Place A2B Right & Place Bread Basket & Place Bread Skillet & Place Burger Fries & Place Can Basket & Place Cans Plasticbox & Place Container Plate & Place Dual Shoes & Place Empty Cup \\
\midrule
DP & 2.0 & 13.0 & 14.0 & 11.0 & 72.0 & 18.0 & 40.0 & 41.0 & 8.0 & 37.0 \\
DP3 & 46.0 & \textbf{49.0} & 26.0 & 19.0 & 72.0 & \underline{67.0} & 48.0 & \underline{86.0} & \underline{13.0} & 65.0 \\
Flow Policy & 37.0 & 22.0 & 18.0 & 30.0 & 55.0 & 17.0 & 18.0 & 80.0 & 7.0 & 58.0 \\
MP1 & \underline{48.0} & \underline{29.0} & \textbf{40.0} & \underline{39.0} & \underline{84.0} & 58.0 & \underline{86.0} & \textbf{87.0} & \textbf{14.0} & \underline{83.0} \\
\rowcolor{oursblue}
HDP3 \textbf{(Ours)} & \textbf{50.0} & 27.0 & \underline{39.0} & \textbf{40.0} & \textbf{90.0} & \textbf{78.0} & \textbf{96.0} & \textbf{87.0} & 8.0 & \textbf{89.0} \\
\bottomrule
\end{tabular}%
}
\vspace{3pt}

\resizebox{\textwidth}{!}{%
\begin{tabular}{@{}l*{10}{c}@{}}
\toprule
~ & Place Fan & Place Mouse Pad & Place Object Basket & Place Object Scale & Place Object Stand & Place Phone Stand & Place Shoe & Press Stapler & Put Bottles Dustbin & Put Object Cabinet \\
\midrule
DP & 3.0 & 0.0 & 15.0 & 1.0 & 22.0 & 13.0 & 23.0 & 6.0 & 22.0 & 42.0 \\
DP3 & \underline{36.0} & \underline{4.0} & \textbf{65.0} & \textbf{15.0} & 60.0 & 44.0 & \textbf{58.0} & 69.0 & \underline{60.0} & \textbf{72.0} \\
Flow Policy & 9.0 & 2.0 & 21.0 & 6.0 & 51.0 & 44.0 & 39.0 & \textbf{92.0} & 0.0 & 7.0 \\
MP1 & 27.0 & 2.0 & \underline{53.0} & \underline{14.0} & \underline{61.0} & \underline{52.0} & 40.0 & 64.0 & 10.0 & 48.0 \\
\rowcolor{oursblue}
HDP3 \textbf{(Ours)} & \textbf{39.0} & \textbf{7.0} & \textbf{65.0} & 9.0 & \textbf{65.0} & \textbf{65.0} & \underline{57.0} & \underline{73.0} & \textbf{64.0} & \underline{63.0} \\
\bottomrule
\end{tabular}%
}
\vspace{3pt}

\resizebox{\textwidth}{!}{%
\begin{tabular}{@{}l*{10}{c}@{}}
\toprule
~ & Rotate QRcode & Scan Object & Shake Bottle & Shake Bottle Horizontally & Stack Blocks Three & Stack Blocks Two & Stack Bowls Three & Stack Bowls Two & Stamp Seal & Turn Switch \\
\midrule
DP & 13.0 & 9.0 & 65.0 & 59.0 & 0.0 & 7.0 & \underline{63.0} & 61.0 & 2.0 & 36.0 \\
DP3 & \textbf{74.0} & 31.0 & \underline{98.0} & \textbf{100.0} & \underline{1.0} & 24.0 & 57.0 & \underline{83.0} & 18.0 & 46.0 \\
Flow Policy & 14.0 & 35.0 & \underline{98.0} & 93.0 & 0.0 & 13.0 & 0.0 & 73.0 & 20.0 & 50.0 \\
MP1 & 44.0 & \underline{41.0} & 97.0 & \underline{98.0} & \underline{1.0} & \underline{34.0} & 50.0 & 77.0 & \underline{28.0} & \textbf{62.0} \\
\rowcolor{oursblue}
HDP3 \textbf{(Ours)} & \underline{55.0} & \textbf{45.0} & \textbf{100.0} & \textbf{100.0} & \textbf{2.0} & \textbf{41.0} & \textbf{74.0} & \textbf{89.0} & \textbf{43.0} & \underline{55.0} \\
\bottomrule
\end{tabular}%
}
\centering
\caption{Evaluation on the Robotwin2.0 benchmark. Each task is tested across 100 randomly generated scenes.}
\label{tab:robotwin2_pocketdp3_large_full}
\end{table*}

\section{Limitations} \label{app:lim}
Although we validate the efficiency of our architecture on 3D-input visuomotor policies, its effectiveness on other input modalities remains to be investigated. In addition, the precision of our theoretical analysis still leaves room for further improvement.

\section{Broader Impacts}
\label{app:broader}

This work studies how to make diffusion-based visuomotor policies smaller and faster while maintaining strong manipulation performance. A potential positive impact is that more efficient policies can lower the hardware and energy cost of robotics research and deployment, making strong visuomotor control methods more accessible to smaller labs and resource-constrained platforms. In addition, faster inference can improve the practicality of closed-loop robot control in real-world manipulation settings where latency matters.

At the same time, more efficient manipulation policies can also introduce risks if they are deployed without adequate safeguards. In particular, visuomotor policies may fail under distribution shift, unexpected contacts, sensor noise, or task conditions that differ from those seen in training. Such failures can lead to unsafe motions, unintended collisions, or damage to surrounding objects. More broadly, improved manipulation capability may accelerate automation in labor-sensitive settings or be adapted to applications that require stronger safety, reliability, or accountability guarantees than those studied here.

We therefore emphasize that our experiments are limited to benchmark environments and controlled laboratory settings, and we do not claim that the method is ready for safety-critical deployment. We encourage future users to combine policies of this kind with standard robotic safety measures, including simulation-based validation before deployment, conservative workspace and collision constraints, human oversight during testing, and hardware-level protections such as emergency stops and motion limits.

\section{External Assets, Licenses, and Planned Release}
\label{app:assets}

Our experiments rely on several existing benchmarks and open-source codebases. For RoboTwin2.0, we use the official RoboTwin repository and benchmark assets; the official repository is released under the MIT license. For MetaWorld, we use the official MetaWorld benchmark implementation, which is also released under the MIT license. For Adroit, we follow the standard D4RL/Adroit benchmark assets; the D4RL codebase is released under the Apache-2.0 license, and the benchmark maintainers state that, unless otherwise noted, the datasets are released under CC BY 4.0.

For baseline implementations, we use official open-source code when available and cite the corresponding original papers in the bibliography. In particular, the official public repositories for Diffusion Policy, 3D Diffusion Policy (DP3), and FlowPolicy are released under the MIT license. We follow the documented licenses and terms of use for these assets and will include the exact repository versions, package versions, or commit identifiers used in our experiments in the supplemental material.

To support reproducibility, we will provide anonymized code and documentation alongside the submission's supplemental material. The release package will include environment setup instructions, exact commands for training and evaluation, and a license file for the released code.



\end{document}